\newtheorem{thm}{Theorem}
\newtheorem{lm}{Lemma}
\newtheorem{df}{Definition}
\newtheorem{cond}{Assumption}
\begin{document}

%\hyphenation{IEEE International Symposium on Information Theory 2017}

\title{Statistical Robust Chinese Remainder Theorem for Multiple Numbers: Wrapped Gaussian Mixture Model}

\author{Nan Du, Zhikang Wang and Hanshen Xiao 

\thanks{Nan Du is with Department of Statistics, Harvard University, USA. E-mail: nandu@mit.edu}
\thanks{Zhikang Wang is with the Department of Physics, University of Tokyo, 7-3-1 Hongo, Bunkyo-ku, Tokyo, Japan. E-mail: wang@cat.phys.s.u-tokyo.ac.jp }
\thanks{Hanshen Xiao is with CSAIL and the EECS Department, MIT, Cambridge, USA. E-mail: hsxiao@mit.edu.}% <-this % stops a space
}

%\hyphenation{IEEE Transaction on Signal Processing}

%\markboth{IEEE Transaction on Signal Processing}%

\let\lc\langle
\let\rc\rangle

\date{}
%\markboth{IEEE Transaction on Signal Processing}%
%{Shell \MakeLowercase{\textit{et al.}}: Bare Demo of IEEEtran.cls for IEEE Journals}
\date{}

\maketitle

\begin{abstract}
Generalized Chinese Remainder Theorem (CRT) has been shown to be a powerful approach to solve the ambiguity resolution problem. However, with its close relationship to number theory, study in this area is mainly from a coding theory perspective under deterministic conditions. Nevertheless, it can be proved that even with the best deterministic condition known, the probability of success in robust reconstruction degrades exponentially as the number of estimand increases. In this paper, we present the first rigorous analysis on the underlying statistical model of CRT-based multiple parameter estimation, where a generalized Gaussian mixture with background knowledge on samplings is proposed. To address the problem, two novel approaches are introduced. One is to directly calculate the conditional maximal a posteriori probability (MAP) estimation of residue clustering, and the other is to iteratively search for MAP of both common residues and clustering. Moreover, remainder error-correcting codes are introduced to improve the robustness further. It is shown that this statistically based scheme achieves much stronger robustness compared to state-of-the-art deterministic schemes, especially in low and median Signal Noise Ratio (SNR) scenarios. 
\end{abstract}

\begin{IEEEkeywords}
Chinese Remainder Theorem (CRT), Ambiguity Resolution, Maximum likelihood estimation (MLE), Generalized Gaussian Mixture Model, Maximum a posteriori probability (MAP),  
\end{IEEEkeywords}

\begin{section}{Introduction}

\noindent There are many practical scenarios in which measurements of a physical quantity, such as frequency and phase, are separately and distributively conducted on different devices. The well-known ambiguity resolution problem is one of the potential challenges lying in these circumstances. Due to hardware limitation, some measurements result in unavoidable modulo operation when the samples are obtained.

To be more specific, when an exponential signal %$f(t) =\sum_{j=1}^{N} e^{i2\pi f_j t}$ 
is undersampled, 
%$f_s(t) = e^{i 2\pi f_s t}$, 
after Fourier transform, it can only tell the residues of frequency modulo sampling rate from the observed peaks [10]-[13], [18]-[26]. Apart from frequency ambiguity, range ambiguity also occurs in phase-based distance measurement \cite{xia2007phase}. For example, in a pulse repetition frequency (PRF) radar \cite{mle2}, when the distance exceeds the wavelength of transmit pulses, the raw signal from a reflection is the distance modulo the used wave length. As two fundamental tasks, frequency and phase estimation have a wide range of applications, such as localization in wireless sensor networks \cite{mle2} \cite{wang2011robust} and imaging of moving targets in synthetic aperture radar (SAR) \cite{SAR1},\cite{SAR2},\cite{SAR3} \cite{SAR4}.

Although Chinese Remainder Theorem (CRT) is a straightforward approach to deal with the relation between integers and residues, conventional CRT cannot be trivially applied to solve the above problems. The underlying difficulty is twofold: ambiguity and errors. Firstly, in the model, we need to reconstruct several real numbers simultaneously. As the residues are observed from unordered sample sets, the correspondence between the residues and the numbers is missing. The correspondence ambiguity further strengthens the moduli operation ambiguity. Secondly, errors. Errors not only makes the correspondence determination harder; on the other hand, because the representation of a number with its residues is non-weighted, error control in CRT is more complicated than weighted systems. A small error in a residue can cause incredibly large reconstruction deviation in traditional CRT. To this end, in the last two decades there has been a great deal of research work contributed to overcome the two above obstacles, and the development of robust and generalized CRT has had tremendous progress, which makes it applicable to many higher level problems.

%In the line of research respect to errorless CRT for multiple numbers, the study is two folded. 

\begin{itemize}
\item \textbf{Ambiguity:} The largest dynamic range $D$ of $Y_i$s needs to be found, given a group of moduli $\{m_1, m_2, ... ,m_L\}$ ignoring statistical errors. It is basically a geometric algebra problem that for all possible $\{Y_1,Y_2,...,Y_N\}, Y_i \in [0,D)$, we ask for a maximum $D$ while ensuring no ambiguity in the residue sets%$\{\langle Y_1\rangle_{m_1}, \langle Y_2\rangle_{m_1}, ... , \langle Y_i \rangle_{m_L} \}_i$, 
i.e., their residue sets are distinct. The first lower bound of $D$ in general was given in \cite{dynamic-1}, and it was sharpened further in \cite{dynmaic-sharpen}, and also analyzed in \cite{dynamic-2} under an additional assumption that the residues from each sampler are distinct. So far, the optimal bound of $D$ is only known for $N=2$ case \cite{dynmaic-two}. Another direction in this field is to explore efficient decoding schemes to recover $Y_i$. One trivial way is to enumerate all possibility of residue combinations, and to pick one from each to implement CRT. Clearly it encounters exponential complexity $O(N^L)$. However, under the assumption of distinct residues, polynomial time algorithm also exists, as in \cite{dynamic-2}, \cite{TSP2018}, based on finding roots of an integer-coefficient polynomial using LLL algorithm.

\item \textbf{Error:} For the one integer case, i.e., $N=1$, when the residues bear errors, the error control techniques involved are relevant to error correcting codes. After a long history of remainder code study \cite{rns1963}, \cite{krishna1993theory}, \cite{jenkins1984technique}, the first polynomial time decoding scheme was proposed by Goldreich et al. \cite{goldreich1999} and the correction capacity in the list decoding scenario was improved further by Boneh \cite{boneh2002} and Guruswami et al. \cite{guruswami2000}. From the classic coding theory perspective, the above-mentioned researches only focus on the errors measured in Hamming weights. While, in our model, small errors exist in every measurement, and clearly using infinite norm to describe the errors should be better. The first error control with a bound of error magnitude was presented in \cite{xia2007phase} using a searching-based reconstruction method. Some follow-up work \cite{li2008fast}, \cite{li2009fast} further reduced the dimension of searching till the first closed-form robust CRT (RCRT) proposed in \cite{closed}. Generalized version of the closed form RCRT can be found in \cite{multistage}, \cite{xu2014solving}. In general, adding redundancy is necessary. The difference with a Hamming-weighted error scenario is that, when errors have bounded magnitude, using non-coprime moduli is more feasible. In \cite{sp2017}, it was proved that the most efficient way is to select moduli such that they both globally and pairwise, share a same greatest common factor (gcd), i.e., in a form $m_l = \Gamma M_l$, where $M_l$s are co-prime $l=1,2,...,L$. In addition, different from exact error correction, when errors are measured in an infinite norm, the best performance we may expect is that the deviation between $Y_i$ and the estimation $\hat{Y}_i$, i.e., $|\hat{Y}_i-Y_i|$, is no bigger than the maximum error magnitude, which we will address formally later.
\end{itemize}

To overcome the above two challenges simultaneously, a special case of $N=2$ was solved in \cite{RCRT-two} and the generalized solution was presented later in \cite{TSP2018}. The idea behind \cite{TSP2018} is made up of two parts. Still assuming that $m_l$ is in a form $\Gamma M_l$, under an error bound of $\frac{\Gamma}{4N}$, the problem can be reduced to an errorless CRT to approximate the folding number $\lfloor \frac{Y_i}{\Gamma} \rfloor$. Secondly, by implementing Generalized CRT for multiple numbers, $\lfloor \frac{Y_i}{\Gamma} \rfloor$ can be approximated, and the correspondence between $Y_i$ and residues is further determined. Hence the rest work for reconstruction is trivial. 

In spite of deterministic assumption of errors, the problem has not been systematically studied in statistics. To the best of our knowledge, in the line of research respect to statistical CRT, the maximum likelihood estimation has only been explored in \cite{mle1}, \cite{mle2} for a single number. It is noted that, when $N \geq 2$, almost all existing work figures out the correspondence ambiguity of residues with the help of redundancy. In \cite{TSP2018}, it is required that the least common multiple (lcm) of all moduli should be in the order of the product of $Y_i$, i.e, 
\begin{equation}
\label{dynamic}
\Gamma \prod_{l=1}^{L} M_l = O(D^N),
\end{equation}
where $D$ is still the dynamic range of $Y_i$s. With a fixed error bound, when more moduli are added for sufficient coding redundancy to deal with a larger $N$, \footnote{From (\ref{dynamic}), the number of moduli required is almost proportional to $N$ to deterministically determine the correspondence ambiguity.} the probability that all errors fall into one admissible range decreases exponentially. Thereby, surprisingly, using more moduli and samples will degrade the estimation performance, and this downside is more evident when SNR is low. Here, our motivation is whether such gap can be alleviated using statistical inference. We are inspired by the fact that, when the correspondence between $Y_i$s and residues is obtained, the problem is reduced to $N$ independent RCRT for each $Y_i$, and thus we only require the lcm of all moduli to be bigger than $D$, i.e., $\Gamma \prod_{l=1}^{L} M_l >D$. Different from deterministic methods that leverage coding redundancy, we show that with statistical techniques, the correspondence between residues and $Y_i$s can be estimated more efficiently. Our contribution is summarized as below. 

\begin{itemize}
\item We show that RCRT for multiple numbers can be reduced to a generalized wrapped Gaussian Mixture Model (GMM) with extra information on sampling. A systematical statistical analysis with further explanation in its complexity is presented.
\item We propose two algorithms to address the problem. In Algorithm 1, we first derive the MAP of classification on residues under Assumption 1 in a semi-closed form and the problem is thus reduced to $N$ conventional RCRT. Inspired by \cite{mle1} which applies MLE-based RCRT, we further propose Algorithm 2 as an iterative scheme to find out the MAP of both $Y_i$ and classification in general.
\item We show that the tradeoff amongst the four primary parameters, $N$, $L$, $\Gamma$, and SNR, can be further improved by incorporating with error-correcting codes. With extensive simulation results, it is shown that the statistically based scheme significantly improves performance when compared to deterministic methods, especially for low and median SNR cases. For very high SNR cases the deterministic methods may outperform ours, which is consistent with the theoretical analysis. 
\end{itemize}

The rest of our paper is organized as follows. In Part II, the background and a skeleton of our proposed scheme are presented. Part III develops a framework of generalized GMM with prior on sampling, and an EM-based scheme is proposed to approximate the MLE. In Part IV, the MAP of clustering is analyzed, and we prove the optimal solution can be expressed in a semi closed-form under Assumption \ref{cond}. In Part V, the simulation results of the performance comparison and parameter trade-off are presented. Remainder codes are further introduced to strengthen the clustering. Conclusions and future prospects follow at last.  

\end{section}

\section{Background}
\noindent First we restate the problem formally here. In the task, there are $N$ real numbers, denoted by $Y_1, Y_2, ... ,Y_N$, independently and uniformly distributed in $[0,D)$, where $D$ is a predetermined positive number, termed as the dynamic range. For the $l^{th}$ sampling, with a fixed modulus $m_l$, an un-ordered sample set formed by the residues of each $Y_i$ modulo $m_l$ buried with errors is obtained.

%we have the following assumption on the parameters of the model. First, $X_i$ are independently and random over $[0,D)$ and thus $r^c_i = \langle X_i \rangle_{\Gamma}$ are also i.i.d. random on $[0,D)$. Furthermore, throughout the paper, we assume $D=lcm(m_1,m_2,...,m_L)= \Gamma \prod_{l=1}^{L} M_l$. 
 
 Before we start, another important issue that needs quantifying is, what is meant by $robustness$ and what is the best performance we may expect from the assumption. It is noted that, when $N=1$ and $m_l \gg Y_1$, for each $l \in \{1,2,...,L\}$, the residues obtained should be of $Y_1$ itself with noise without modulo operation. Therefore, given an error bound $\delta = \max_{il} |\Delta_{il}|$, the best performance we can expect will be that the reconstruction error $|\hat{Y}_i - Y_i|$ is no bigger than $\delta$. 

Unfortunately, when modulo operations are involved and moduli are relatively co-prime, the conventional residue system where an integer is represented by its residues is non-weighted. Different from commonly used weighted system such as binary representation, the reconstruction error is not proportional to the magnitude of the residue error in general. From a perspective of geometry, when a group of moduli are relatively co-prime, i.e., the least common multiple equals the product of them, each integer can be one-to-one mapped to lattice points in a coordinate space, illustrated in \cite{TSP2018} \cite{towards}, determined by its resides. Taking it as a encoding system, the codes are on lines in direction $(1,1,...,1)$ within a cube. For convenience, we call such lines as distribution lines.  Even merely considering the lattice points, the minimal infinite distance between any two of them is $1$, implying the maximum error control capacity is $0.5$. Moreover, such error bound is also tight. Once the error exceeds the bound,  it will be closer to another lattice rather than the original one since the mapping from integers to the lattice points is bijection. When we take all real numbers within $[0, lcm(m_1, ... ,m_L))$ into account, the minimal distance of the codes in our setting is 0 and therefore it would be impossible for perfect error correction, different from the classic coding theory with errors measured in Hamming weight. To achieve the best error control performance, as proved in \cite{towards}, it is equivalent to figuring out which line the code of $Y_i$ is on, and the minimal distance among those distribution lines given moduli has been explored in \cite{TSP2018}. A corollary from \cite{TSP2018} is that when all moduli are increased by a common factor, $\Gamma$, the minimal distance will also increase by $\Gamma$ correspondingly. 

Apart from the minimal distance, to overcome the absence of the correspondence of residues, additional redundancy beyond introducing a common divider is necessary. So far, such distinguishability is mainly derived and proved by number theory foundation of classic remainder codes. The best result known we have is that: the lcm of all moduli should be in the order of $\prod_{i=1}^N Y_i$ and the error bound $\delta = \max_{il} |\Delta_{il}| < \frac{\Gamma}{4N}$. Such error bound can be generalized to  
\begin{equation}
\label{pre}
\max_{l} \Delta_{il} - \min_{l} \Delta_{il}<\frac{\Gamma}{2N}.
\end{equation} 

Here $\Delta_{il}$ denotes the error in the residue of $Y_i$ modulo $m_l$. Since the errors $\Delta_{il}$ introduced are independent, the probability that assumption (\ref{pre}) holds is  
\begin{equation}
\label{minprodensity}
{(\int_{-\infty}^{\infty} p( \min_{l} \Delta_{il} =x) \Pr(  \Delta_{il} \in [x,x+2\delta), l=1,2,...,L ) ~ dx)}^{N}
\end{equation}
where $p$ is the probability density function of $\min_{l} \Delta_{il}$. For a Gaussian noise, (\ref{minprodensity}) can be further upper bounded by 
\begin{equation}
\label{pre-upper}
{\Pr(  \Delta_{il} \in [-\delta, \delta) )}^{N(L-1)}
\end{equation}
Here for simplicity, we assume that the variance $\sigma^2_l$ is the same for each $l$. As $N$ increases, with fixed SNR, (\ref{pre-upper}) decays exponentially of $O(N^2)$. \footnote{$L$ indeed can be regarded as a linear function of $N$ where the fraction of $\frac{L}{N}$ is around the average number that the lcm of $\frac{L}{N}$ many moduli is bigger than $X_i$. } To compensate the loss, we have to increase $\Gamma$ sharply, which motivates us to consider other efficient estimators, where each time the number of samples does not depend on $N$. Based on CRT, the least number of sampling $L_{min}$ should satisfy $lcm (m_1, m_2, ... ,m_{L_{min}}) > \max_i Y_i$, in which case the maximal possible dynamic range of $Y_i$ is also achieved. As we state above, to meet the bound derived from number theory, for each time estimation, much more than $L_{min}$ times sampling are required while the more samples, the lower success rate. Furthermore, in our case to deal with the estimation under modulo functions, once the condition (\ref{pre}) is broken, the reconstruction error will be incredibly large. \footnote{Since the difference between any two numbers even from two adjacent distribution lines can be as large as in a scale of $D$ due to the non-weighted property of residue system. Thereby when a wrong distribution line found, the estimation becomes meaningless. More details can be referred to the performance simulation in \cite{closed} \cite{RCRT-two}. } Therefore, we try classifying the residues by resorting to statistics with least samplings. 

It is noted that 
\begin{equation}
 \mu_{i}  = \langle \langle Y_i \rangle_{m_l} \rangle_{\Gamma} = \langle Y_i \rangle_{\Gamma} 
\end{equation}
As a property shared by all residues of $Y_i$ modulo $m_l$, we call $\mu_{i} $ the common residue of $Y_i$. When $\mu_{i} $ are distinct, such syndrome can be used to distinguish the correspondence of residues. However, with combined occurrence of errors, such detection is not strong enough but it inspires us to consider the underlying relationship between $\mu_{i}$ and the MAP of the residue classification. Throughout the rest of the paper, we will show that on achieving the maximal possible dynamic range, all the statistical analysis on $Y_i$ can be elegantly reduced to that on those erroneous common residues $r_{il}= \langle R_{il} \rangle_{\Gamma}$. Here $R_{il}$ denotes the erroneous residue of $Y_i$ modulo $m_l$ observed, i.e., $\langle Y_i + \Delta_{il} \rangle_{m_{l}}$. For the convenience of readers, all constantly used notations are listed in the above Table I.

\begin{table}
\caption{List of Notations}
\begin{tabular*}{8.8cm}{lll}
\hline
Notations & ~~~~~~~~~ Explanation    \\
\hline
$L$  & The number of samplings / moduli selected  \\
$ m_l$ & Moduli selected  \\
$\mathscr{M}$ & $\mathscr{M}=\{m_l, l=1,2,...,L\}$ \\
$N$ & The number of real numbers to be reconstructed\\
$Y_i$ & Real number to be reconstructed\\
$K_l$ &Permutation variable for each sampling\\
$R_{il}$ & Raw observations $i,l$ \\
$\delta_{il}$ & random noise in observation $i,j$\\
$\mu_i$ & Common residue, residue of real number $Y_i$ modulus by $\Gamma$\\
$r_{il}$& Residue of observation $R_{ij}$ modulus by $\Gamma$ \\

$\hat{Y}_{i}$ & Estimation of $Y_i$ \\
$\hat{K}_l$ &Estimation for Permutation variable of each sampling\\
$\hat{\mu}_{i}$ & Estimation for residue of real number $Y_i$ modulus by $\Gamma$\\

\hline
\end{tabular*}
\end{table}
%\section{Maximum Likelihood Estimation for Multiple Numbers from Residues}

\section{Algorithm One: Deterministic Maximum a Posteriori Estimation For Clustering}
\noindent In this section, we introduce our non-informative prior and restate our target problem as a Bayesian statistical model. We further show that under Assumption 1, the MAP of residue classification is in a semi-closed form and can be determined from $O(NL)$ candidates. Relying on the MAP of classification, it is reduced to $N$ independent conventional RCRT for a single number. 
%\begin{algorithm}
%\caption{Data Generation} 
%\textbf{Data Generation for the $l^{th}$ sampling}
%\label{data generation}
%\begin{itemize}
%\item A latent uniform random permutation variable $k_l$ is drawn, where 
%\begin{equation}
%k_l = \left\{
% \begin{matrix}
%   1 & 2 & ... & N \\
%   k_l(1) & k_l(2) &... & k_l(N) 
% \end{matrix}
%  \right\} 
 % \end{equation}
%\item We receive a set of residues $\{ \widetilde{R}_{il}, i=1,2,...,N\}$, where $ \widetilde{R}_{k_l(i)l} = \langle X_i + \Delta_{il} \rangle_{m_l}$, i.e., $ \widetilde{R}_{k_l(i)l}$ is an observation of $X_i$ with noise. Here we assume $\Delta_{il}$ are independently and identically in a normal distribution $N(0,\sigma^2_l)$ and $\sigma_l$ is preset. 

%\item With  $\{ \widetilde{R}_{il}, i=1,2,...,N\}$, we further process them to be modulo $\Gamma$, the greatest common divisor(gcd) that all $m_l$ shares, where  $ \widetilde{R}^c_{il} = \langle  \widetilde{R}_{il} \rangle_{\Gamma}$
%\end{itemize}

%\end{algorithm}

We start from introducing our target model as a Bayesian setup. Let $N$ real numbers $Y_1, Y_2, ..., Y_N$ be our target parameters to reconstruct, and we assign their prior as uniformly distributed under domain $[0,D)$. In total, $L$ times sampling are implemented, and a set of moduli $m_l = \Gamma \times M_l, l = 1, 2, ..., L$ and $\sigma_l, l = 1, 2, ..., L$, are designed and preset. As mentioned in above sections, $\Gamma$ is a preset real number, and $M_l$ are relatively co-prime. On achieving the maximal dynamic range, according to CRT, $D$ is set as $D = \Gamma \times \prod_{l = 1}^{L} M_l$. 

During the $l^{th}$ sampling, we will observe an unordered set of noisy residues, denoted by $\bm{R}_{[1:N],l}=(R_{1l}, R_{2l}, ... , R_{Nl})$ and $\bm{R}_{[1:L]}=(\bm{R}_{[1:N],1}, \bm{R}_{[1:N],2}, ... , \bm{R}_{[1:N],L})$. We know they are the residues of $Y_{[1:N]}$ modulo $m_l$, while we do not know their correspondence relationships. To specify the problem, we introduce $\textbf{K}_l, l =1, 2, .., L$, as a set of i.i.d. $N$-permutation variable, so that we may specify that $R_{{K_{l}(i)},l} = \langle Y_i + \Delta_{il}\rangle_{m_l}, i = 1, 2,..., N, l = 1, 2,..., L$, where $\Delta_{il}, i= 1,2,.., N$ are i.i.d random noise following gaussian distribution $N(0, \sigma_i^2)$. A non-informative prior, uniform distribution, is also assumed for the permutation variable $\textbf{K}_{[1:L]}=({K}_1, {K}_2, ... ,{K}_L)$. 

To simplify future discussions on the 'smaller circle' (modulo $\Gamma$), we decompose $Y_i$ into $Y_i = k_i \Gamma + \mu_i$, where $\mu_i := \langle Y_i\rangle_{\Gamma}$ denotes the residue of our target parameter $Y_i$ modulo $\Gamma$, and $k_i$ denotes the corresponding quotient. Since $Y_i$ follows a prior of uniformed distribution under $[0, D)$, we assign $k_i$ as an integer random variable uniformly distributed within $\{0, 1, 2,..., \frac{D}{\Gamma}-1\}$, and $\mu_i$ uniformly distributed within $[0,\Gamma)$. Similarly, we decompose $R_{il}$ into $j_{il}  \Gamma +  r_{il}$, where $r_{il} := \langle R_{ij} \rangle _{\Gamma}$ denote residues modulo by $\Gamma$, and $j_{il}$ denotes the quotient. We therefore move all parameters and observations onto a "smaller circle", with modulo $\Gamma$.  

Considering nature of the target problem, we set the permutation variable $\textbf{K}_{[1:L]}$ as our target variable in this algorithm, and design the objective as a Maximum a Posteriori Estimation (MAP) for $\textbf{K}_{[1:L]}$, denoted by $\hat{\textbf{K}}_{[1:L]}$ i.e.
\begin{equation}
\label{huahua}
\begin{aligned}
\hat{\textbf{K}} _{[1:L]}
& := arg_{\textbf{K}}\max  p(\textbf{K}_{[1:L]}|  \textbf{R}_{[1:L]}  )  \\
& \propto arg_{\textbf{K}}\max p(\textbf{R}_{[1:L]}  |\textbf{K}_{[1:L]}) \\ 
& \propto arg_{\textbf{K}}\max \int_{Y_1} ... \int _{Y_N} p(\textbf{R}_{[1:L]}| \textbf{Y}, \textbf{K}_{[1:L]} ) dY_1...dY_N \\
\end{aligned}
\end{equation}
where $\textbf{Y}$ is similarly the compact form of $(Y_1, Y_2, ... ,Y_N)$. The complexity of directly solving the objective function mainly comes from the comparison among a total $L \times N!$ scenarios for different $\textbf{K}_{[1:L]}$ as well as the vagueness in measuring the Gaussian noise $\Delta_{il}$. The exponential number of scenarios for permutation is pretty obvious. We now further discuss on the second problem. 

When conditioning on $\textbf{K}_{[1:L]} $, we would know the correspondence between $\textbf{Y}$ and $\textbf{R}_{[1:L]}$. Therefore, the integration in equation (\ref{huahua}) can be reduced to calculating the following equation, where we remove $\textbf{K}$ and assume $R_{il}$ corresponds to $Y_i$:

\begin{equation}
\label{deduction2}
\begin{aligned}
&\int_{Y_i} p(\textbf{R}_{[1:L]}| \textbf{K}_{[1:L]} ,Y_i ) dY_i \\
&\propto \int_{0}^{\Gamma} \sum_{k_i = 0}^{\frac{D}{\Gamma}} \prod_{l = 1}^{L} \sum_{j_{il} = -\infty}^{\infty} p(j_{il}  \Gamma +  r_{il} | k_i \Gamma + \mu_i ) d\mu_i  \\
&\propto \int_{0}^{\Gamma} \sum_{k_i = 0}^{\frac{D}{\Gamma}} \prod_{l = 1}^{L}  \sum_{j_{il} = -\infty}^{\infty} \frac{1}{\sqrt{2\pi}\sigma_l} e^{\frac{-(r_{il} - \mu_i + (j_{il} - k_i )\Gamma)^2}{2\sigma_l^2}}    d\mu_i \\
 &\propto \int_{0}^{\Gamma}   \prod_{l = 1}^{L} \sum_{j'_{il} = -\infty}^{\infty}\frac{1}{\sqrt{2\pi}\sigma_l} e^{\frac{-(r_{il} - \mu_i + j'_{ij}\Gamma)^2}{2\sigma_l^2}}    d\mu_i 
\end{aligned}
\end{equation}

Since $j_{il}$ are inherently independent among different observations, it's hard to deduce further. However, it is noted that when $L = 1$, i.e., only one data point was observed, we would successfully integrate out $\mu_i$ as shown in following equation: 
\begin{equation}
\begin{aligned}
 &\sum_{j'_{il} = -\infty}^{\infty} \int_{0}^{\Gamma} \frac{1}{\sqrt{2\pi}\sigma_l} e^{\frac{-2(r_{il} - \mu_i + j'_{il}\Gamma)^2}{2\sigma_l^2}}    d\mu_i \\
&= \int_{-\infty}^{\infty} \frac{1}{\sqrt{2\pi}\sigma_l} e^{\frac{-2(r_{il} - \mu_i)^2}{2\sigma_l^2}}    d\mu_i
\end{aligned}
\end{equation}

In the following, we introduce Assumption 1, under which a polynomially fast algorithm is creatively proposed to deterministically derive the MAP estimation for $\textbf{K}_{[1:L]}$. We start from introducing some notations for noise intervals: for each $i = 1, 2,..., N$, we define an interval $I_i$ as $I_i = [ \mu_i+ \min_{l} \Delta_{il},  \mu_i+ \max_{l} \Delta_{il} ]$, i.e., starting from $\mu_i+ \min_{l} \Delta_{il}$ clockwise to $\mu_i+ \max_{l} \Delta_{il}$. Such notation is illustrated in Fig. \ref{circle-interval}. In addition, let $\Omega_{i}$ denote the length of the directed interval $I_i$, i.e.. $\Omega_{i}:=\max_{l} \Delta_{il} - \min_{l} \Delta_{il}$. 

\begin{figure}
\centering
\includegraphics[width=2.87 in,height=1.66 in]{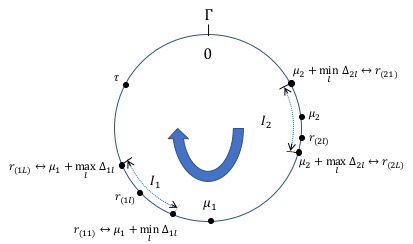}
\caption{Illustration for the noise interval}
\label{circle-interval}
\end{figure}

\begin{cond}  
\label{cond}
There exists some point $\tau$ on the circle modulo $\Gamma$ such that it is not within any interval $I_i$ and for $i=1,2,...,N$, $\Omega_i < \frac{\Gamma}{2}$.
\end{cond}

As illustrated in Fig.  \ref{circle-interval}, $\tau$ can be arbitrary any point on the circle which is not overlapped by any $I_i$. From the principle of clustering, still we would like to figure out the least requirement of $\Delta_{il}$ to achieve robustness that even a perfect classification exists. As explained before, under perfect classification, our problem will be reduced to $N$ independent RCRT for a single number. In \cite{closed}, \cite{sp2017}, such robustness is proved to be achieved when $\Omega_i < \frac{\Gamma}{2}$ for $i=1,2,...,N$. A generalized version with MLE techniques is proposed in \cite{mle1}. For simplicity, we just assume that $\Omega_i < \frac{\Gamma}{2}$ for each $i$. Indeed the following results can also be generalized under the loosened condition in \cite{mle1}. Also, it is not hard to observe that if $\sum_{i=1}^{N} \Omega_{i} < \Gamma$, then such $\tau$ must exist.

Assumption 1 provides convenience in distance measurement where the circle can be virtually cut at point $\tau$, and stretch into a line. From the relative positions of $r_{il}$ on the line, we can deduce an 'ordered relationship' among points. Specifically, we denote $\{r_{(il)}, l = 1, 2,..., L\}$ as an clockwise ordered sequential statistics of $\{r_{il}, l = 1, 2,..., L\}$. Here, $(il)$ denotes a permutation on the index product $\{il, l = 1,2,...,L\}$ for each $i$, such that $r_{(il)}$ is ordered $l^{th}$ among $r_{i1}, r_{i2},...,r_{iL}$ clockwise starting from $\tau$, illustrated in Fig. \ref{circle-interval}. 

\begin{lm} 
\label{cutting}
For each $i$, the errors $\Delta_{il}$ of the subsequence $r_{(i1)},r_{(i2)}, ...,r_{(iL)}$ are in an ascending order accordingly. Moreover, following the order, we can figure out the ordered statistics of $\Delta_{il}$ for each $i$.
\end{lm}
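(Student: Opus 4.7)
\medskip

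\textbf{Proof proposal for Lemma \ref{cutting}.}

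My plan is to convert the circular geometry to a linear one via the cut at $\tau$ from Assumption \ref{cond}, and then argue that on the resulting line the map $\Delta_{il}\mapsto r_{il}$ is an order-preserving translation. First I would record the identity
\begin{equation*}
 r_{il}=\langle R_{il}\rangle_{\Gamma}=\bigl\langle\langle Y_i+\Delta_{il}\rangle_{m_l}\bigr\rangle_{\Gamma}=\langle \mu_i+\Delta_{il}\rangle_{\Gamma},
\end{equation*}
which places each $r_{il}$ on the arc $I_i=[\mu_i+\min_l\Delta_{il},\,\mu_i+\max_l\Delta_{il}]$ (mod $\Gamma$). Since Assumption \ref{cond} guarantees $\tau\notin I_i$ for every $i$, every arc $I_i$ lies entirely on one side of $\tau$ on the circle.

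Next I would formalise the cut. Define the ``unwrapping'' map $\varphi\colon \mathbb{R}/\Gamma\mathbb{Z}\to[0,\Gamma)$ by $\varphi(x)=\langle x-\tau\rangle_{\Gamma}$, so that $\varphi$ is the clockwise arc-length from $\tau$. Because $I_i$ does not wrap across $\tau$, the restriction $\varphi|_{I_i}$ is continuous and strictly monotone, being the affine shift $x\mapsto x-\tau+c_i\Gamma$ for a single integer $c_i\in\{0,1\}$. Applying this to $r_{il}=\mu_i+\Delta_{il}\pmod{\Gamma}$ gives
\begin{equation*}
 \varphi(r_{il})=\mu_i+\Delta_{il}-\tau+c_i\Gamma,
\end{equation*}
which is a strictly increasing affine function of $\Delta_{il}$ (the constant $\mu_i-\tau+c_i\Gamma$ does not depend on $l$). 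Therefore the ordering of the points $r_{il}$ by clockwise position from $\tau$ coincides with the ordering of the scalars $\Delta_{il}$, i.e. $r_{(i1)},r_{(i2)},\ldots,r_{(iL)}$ corresponds to $\Delta_{(i1)}\le\Delta_{(i2)}\le\cdots\le\Delta_{(iL)}$. Furthermore, the constraint $\Omega_i<\Gamma/2$, combined with $\tau\notin I_i$, makes the orientation (clockwise vs.\ counter-clockwise) unambiguous, so no reversal issue arises.

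For the second assertion, once the permutation identifying $r_{(il)}$ with its underlying index has been read off as above, the ordered statistics of $\{\Delta_{il}\}_{l=1}^L$ are recovered simply as the inverse of that permutation applied to the raw errors; equivalently, $\varphi(r_{(il)})-\varphi(r_{(i1)})=\Delta_{(il)}-\Delta_{(i1)}$ gives the ranked errors up to a common shift, and the ranks themselves are precisely what is needed.

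The one subtle point I would watch is the possibility that several $r_{il}$ coincide, or that the clockwise index $(il)$ is not unique; this happens on a measure-zero set under the Gaussian noise model and can be handled by breaking ties arbitrarily. The genuinely non-trivial step, and the one that fails without Assumption \ref{cond}, is the monotonicity of $\varphi|_{I_i}$: if $\tau$ were inside $I_i$ the unwrapping would split $I_i$ into two pieces with opposite apparent orders on the line, destroying the correspondence between arc-position and error magnitude.
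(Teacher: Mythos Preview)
Your proposal is correct and follows essentially the same approach as the paper: both cut the circle at $\tau$, use that $\tau\notin I_i$ to ensure the arc $I_i$ maps to a single interval on the line, and conclude that clockwise position from $\tau$ is an affine, hence order-preserving, function of $\Delta_{il}$. Your version is simply more explicit, introducing the unwrapping map $\varphi$ and the constant $c_i$ where the paper argues geometrically; the added remarks on ties and on why Assumption~\ref{cond} is essential are welcome elaborations but do not change the underlying argument.
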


\begin{proof}
Since $\tau$ is not overlapped by any $I_i$, a necessary equation can be derived from Assumption 1 that $\Omega_{i} < \Gamma$ for any $i$. Therefore the directed interval defined is clockwise distributed starting from $\langle \mu_i + \min_l \Delta_{il} \rangle_{\Gamma}$ to $\langle \mu_i + \max_l \Delta_{il} \rangle_{\Gamma}$ and $\tau$ is in the complementary part, $[0,\Gamma) / I_i$. Therefore, clockwise, the closest one among $r_{i1}, r_{i2},...,r_{iL}$ to $\tau$ should be $r_{(i1)}$, and counterclockwise the closest one is $r_{(iL)}$. Therefore, for each $i$, $r_{(il)}$ is exactly in the order where correspondingly $\Delta_{il}$ are arranged ascendingly. 
\end{proof}

To reflect such relative positions of $r_{il}$ under the permutation $(il)$, we add a shift on $r_{i1},r_{i2}, ...,r_{iL}$ as follows. Essentially, we convert them to the expression on an axis rather than a circle.

\begin{df}
When $0 \leq \tau \leq \min {r}_{{(il)}}$ or $\max {r}_{{(il)}} \leq \tau <\Gamma$, for $i=1,2,...,N$ and $l=1,2,...,L$,

\begin{equation}
\label{shift}
\tilde{r}_{{(il)}} = {r}_{{(il)}}  
\end{equation}

Otherwise,
\begin{equation}
\begin{cases}
\tilde{r}_{{(il)}} = {r}_{{(il)}},  ~~~~~when~~ {r}_{{(il)}} \leq \tau \\
\tilde{r}_{{(il)}} = {r}_{{(il)}}-\Gamma, ~~~~ when~~ {r}_{{(il)}} > \tau  \\
\end{cases}
\end{equation}
\end{df}

%Back to our problem with Condition 1, the optimization is the maximum a posteriori probability (MAP) estimate of $K_l$.
%\begin{equation}
%\label{obj}
%\begin{aligned} 
%& \arg_{\textbf{K}} \max p( K_1, K_2, ... ,K_L| \textbf{R}_{[1:L]}, Cond ~1) \\
%& = \arg_{\textbf{K}}\max \frac{p( K_1, K_2, ... ,K_L, r_{il}, Cond ~1)}{p(r_{il}, Cond ~1) }\\
%& \propto   \arg_{\textbf{K}} \Pr(K_1, K_2, ... ,K_L, r_{il}, Cond ~1) \\
% & = \Pr(s_{il} | Cond ~1,  k_1, k_2, ... ,k_L ) \Pr( Cond ~1 , k_1, k_2, ... ,k_L )
%\end{aligned}
%\end{equation}

%In the following, we firs prove the following lemma to further simplify the optimization of (\ref{obj})
%\begin{lm}
% $\Pr( Cond ~1,  k_1, k_2, ... ,k_L )$ is a constant independent of $s_{il}$.
%\end{lm}

%\begin{proof}
%For a permutation $k_i$ on $\{1,2,...,N\}$, there also exists an alternative definition on $k_i$ such that it is a permutation on the index that the samples $s_{il} \in [0,\Gamma)$ are sorted in an ascending order. Since we can always add a rotation on the circle without changing their relative positions of $s_{il}$.  ... Due to the symmetry,  $\Pr( Condition ~1 |  k_1, k_2, ... ,k_L )$ are all equal to some constant.
%We show that $\Pr( Cond ~1,  k_1, k_2, ... ,k_L ) \in \{0,1\}$. 
%\end{proof}

% Thus we only need to focus on $\Pr(s_{il}, Condition ~1 |  k_1, k_2, ... ,k_L )$.
Under Assumption 1, we have to further define \textit{proper classification}, where the specific statements are postponed but the following lemma provides the motivation and some insights. 
\begin{lm}
Given $r_{il}$ and $K_l$ for $i=1,2,...,N$ and $l=1,2,...,L$, under the assumption that $\Omega_i < \frac{\Gamma}{2}$, $I_i$ can be determined.
\end{lm}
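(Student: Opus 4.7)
The plan is to argue that, once $K_l$ is known, we can group the observed residues by target index, and then for each fixed $i$ the arc $I_i$ is recoverable purely from the combinatorial arrangement of the $L$ points $r_{i1},\ldots,r_{iL}$ on the circle of circumference $\Gamma$, using the gap between consecutive points.

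First I would fix $i$ and observe that $\{r_{il}\}_{l=1}^{L}$ all lie inside the directed arc $I_i$, whose arc length is $\Omega_i<\Gamma/2$ by assumption. Hence the complementary arc on the circle has length $\Gamma-\Omega_i>\Gamma/2$ and contains the distinguished point $\tau$. Sort the $L$ points clockwise around the circle and look at the $L$ arcs between consecutive points (with wrap-around). Exactly one of these arcs, the \emph{outside gap}, is the one that traverses $[0,\Gamma)\setminus I_i$; its length is at least $\Gamma-\Omega_i>\Gamma/2$. Every other gap is wholly contained in $I_i$, so its length is at most $\Omega_i<\Gamma/2$.

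Consequently, the outside gap is strictly longer than every other gap, and can be identified unambiguously as the unique arc of maximal length among the $L$ consecutive-point gaps. Once it is identified, $I_i$ is simply its complementary arc: the clockwise endpoint of $I_i$ is the point lying immediately clockwise of the outside gap (namely $\langle\mu_i+\min_l\Delta_{il}\rangle_{\Gamma}=r_{(i1)}$), and the counterclockwise endpoint is the point lying immediately counterclockwise of it ($\langle\mu_i+\max_l\Delta_{il}\rangle_{\Gamma}=r_{(iL)}$). This pins down both endpoints of $I_i$, hence $I_i$ itself.

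The only delicate point is the strict comparison that separates the outside gap from all interior gaps; this is exactly what the hypothesis $\Omega_i<\Gamma/2$ buys us, since it forces $\Gamma-\Omega_i>\Omega_i$ and thereby rules out any tie or reversal between an interior gap and the outside gap. Without this bound, the longest gap need not be unique and $I_i$ would not be combinatorially distinguishable from the arc spanned by the same points on the opposite side of the circle, so this inequality is the one place where the assumption is genuinely used.
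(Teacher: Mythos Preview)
Your proposal is correct and follows essentially the same idea as the paper's proof: both exploit the fact that any arc of length $<\Gamma/2$ containing all $L$ points must be unique, since its complement has length $>\Gamma/2$. You phrase this via the ``largest gap'' between consecutive points (which must exceed $\Gamma/2$ and hence be unique), while the paper phrases it as a direct uniqueness-by-contradiction argument on the containing arc itself; these are complementary views of the same observation. One small remark: your passing reference to the point~$\tau$ is unnecessary here, since the lemma's hypothesis is only $\Omega_i<\Gamma/2$, not the full Assumption~1, and indeed your argument never actually uses~$\tau$.
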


\begin{proof}
With $K_l$, all samples $r_{il}$ are divided into $N$ subsets and assume that $\{r_{K_l(i)l}, l=1,2,...,L\}$ are all the samples for $\mu_i$. When Assumption 1 holds, there should exist an interval clockwise over the circle starting from $r_{K_{l_1}(i)l_1}$ while ending at $r_{K_{l_2}(i)l_2}$ for some $l_1 \not = l_2 \in \{1,2,...,L\}$ such that the length is smaller than  $\frac{\Gamma}{2}$. All the rest samples $\{r_{K_l(i)l}, l=1,2,...,L\}/ \{r_{K_{l_1}(i)l_1},r_{K_{l_2}(i)l_2}\}$ lie in the interval. Clearly, $r_{K_{l_2}(i)l_2}$ and $r_{K_{l_1}(i)l_1}$ are clockwise neighboring with $\{r_{K_l(i)l}, l=1,2,...,L\}$ in between. We claim such interval is unique and is exactly $I_i$. Otherwise, let there be indices $l_3$ and $l_4$ such that the clockwise-directed interval starting from $r_{K_{l_3}(i)l_3}$ to $r_{K_{l_4}(i)l_4}$ is also in a length smaller than $\frac{\Gamma}{2}$ and contains $\{r_{K_l(i)l}, l=1,2,...,L\}$. Let such interval be denoted by $I'_1$, then $|I'_1|$ equals $\Gamma$ minus the counter part of $I'_1$, i.e, the part starting from $r_{K_{l_4}(i)l_4}$ to $r_{K_{l_3}(i)l_3}$, which is included in $I_1$. Because $|I_1|$ is smaller than $\frac{\Gamma}{2}$. Therefore $|I'_1| \geq \Gamma - |I_1| >\frac{\Gamma}{2}$, a contradiction. Thus, our claim holds.
\end{proof}

To proceed from Lemma 2, a  \textit{proper classification} is that for a possible classification $K_1,K_2,...,K_L$, given $r_{il}$, Assumption 1 still holds, i.e., under  such classification, we can find out the unique $I_i$ where $|I_i|<\frac{\Gamma}{2}$ and there exists a $\tau$ which is not overlapped by any $I_i$. If a classification is not $proper$, then $\Pr(K_1, K_2, ... ,K_L, r_{il}, Assum ~1)=0$. In the following, for simplicity, $Assum ~1$ will stand for Assumption 1, shown in the formulas. 

We further modify our objective function as following: 
\begin{align}
\begin{split}
\hat{K} _{[1:L]}
& := arg_{\textbf{K}}\max  p(\textbf{K} |  \textbf{R}_{[1:L]}  , Assum ~1)  \\
& \propto arg_{\textbf{K}}\max p(\textbf{R}_{[1:L]}  | Assum ~1, \textbf{K}) \times p(Assum ~1, \textbf{K}) \\ 
\end{split}
\end{align}

It's pretty obvious that $Assumption ~ 1$ is independent of permutation $\textbf{K}$. For any $\textbf{K}_0$, 
\begin{align}
\begin{split}
\Pr (Assum~1 | \textbf{K}_0) = \int_{ \textbf{R}_{[1:L]}} p(Assum ~ 1, \textbf{R}_{[1:L]}| \textbf{K}_0) d \bm{R}_{[1:L]} \\
= \int_{\bm{R}_{[1:L]}} p(   Assum ~ 1 |\textbf{K}_0  , \textbf{R}  ) \times p(\textbf{R} | \textbf{K}_0 )d \bm{R}_{[1:L]} \\ 
= \int_{\bm{R}_{[1:L]}} p(  Assum ~ 1 |\textbf{K}_0 , \bm{R}_{[1:L]}  ) \times p(\bm{R}_{[1:L]} ) d \bm{R}_{[1:L]}
\end{split}
\end{align}
Also, 

\begin{align}
\begin{split}
\Pr (Assum~1) =\int_{\bm{R}_{[1:L]}} \sum_{\textbf{K}} p(Assum~1, \textbf{K}, \bm{R}_{[1:L]}) \times p(\textbf{K}) d\bm{R}_{[1:L]}\\
=\sum_{\textbf{K}} \int_{\bm{R}_{[1:L]}} p(Assum ~1|\textbf{K},\bm{R}_{[1:L]}) \times p(\bm{R}_{[1:L]}|\textbf{K}) \times p(\textbf{K}) d\bm{R}_{[1:L]}
\end{split}
\end{align}

Since we set prior information for $\textbf{K}$ as uniformly distributed, showing independence suffices to prove $\int_{R} p(Assum~1|\bm{K},\bm{R}_{[1:L]}) \times p(\bm{R}_{[1:L]}|\bm{K}) d\bm{R}_{[1:L]}$ remain constant across all \textbf{K}. On the other hand, as $\bm{R}_{[1:L]}|\textbf{K}$ is a normal distribution, we know $\int_{\bm{R}_{[1:L]}} p(Assum~1|\textbf{K},\bm{R}_{[1:L]}) \times p(\bm{R}_{[1:L]}|\textbf{K}) d\bm{R}_{[1:L]}$ is constant across all $\textbf{K}$. Thus our claim follows.

 If $\{\textbf{K}_{[1:L]}\}$ is a $proper~classification$, assuming that the $cutting~point$ is $\tau$, following the notations given in Definition 1, then a closed form of (\ref{huahua}) can be obtained as follows,
 
\begin{lm} When $\Pr(\textbf{K}_{[1:L]},\textbf{r}_{[1:L]}, Assum ~1) \not =0$, 
 \begin{equation}
 \begin{aligned}
 \label{MAP}
 \Pr(\textbf{R}_{[1:L]} | Assum~1,& \textbf{K}_{[1:L]} )  = \Pr(\textbf{r}_{[1:L]} | Assum~1, \textbf{K}_{[1:L]} )  =  \\
& \prod_{i=1}^{N} \int_{-\infty}^{\infty} e^{-\sum_{l=1}^{L} w_l (x-\tilde{r}_{K_l(i)l})^2} dx
  \end{aligned}
 \end{equation}
 \end{lm}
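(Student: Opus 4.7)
The plan is to begin from the integral representation derived in equation (7) just preceding the lemma, and then use Definition 1 together with Lemmas 1--2 to collapse the doubly periodic integrand onto a single Gaussian integral on the real line. Because the hypothesis $\Pr(\textbf{K}_{[1:L]}, \textbf{r}_{[1:L]}, Assum~1) \neq 0$ forces $\textbf{K}_{[1:L]}$ to be \emph{proper}, the samples decouple over $i$: after relabeling by $K_l(\cdot)$, the block $\{r_{K_l(i)l} : l = 1,\ldots,L\}$ consists of $L$ noisy observations of the single common residue $\mu_i$, so the joint conditional density factors as a product over $i$ and it suffices to treat one factor.

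First I would justify the identity $\Pr(\textbf{R}_{[1:L]} \mid Assum~1, \textbf{K}_{[1:L]}) = \Pr(\textbf{r}_{[1:L]} \mid Assum~1, \textbf{K}_{[1:L]})$, interpreted as equality up to a factor independent of $\textbf{K}$ in the sense needed by the MAP objective. This is essentially already contained in the derivation of equation (7): once $Y_i = k_i \Gamma + \mu_i$ is integrated out, the CRT-based reparameterization $k_i \leftrightarrow (k_i \bmod M_l)_l$ together with reindexing the inner sums over $j'_{il}$ makes the integer quotients $j_{il}$ of $R_{il}$ drop out of the integrand, leaving an expression that depends on $\textbf{R}_{[1:L]}$ only through the residues $\textbf{r}_{[1:L]}$. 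It then remains to show, for each $i$,
\begin{equation*}
\int_0^\Gamma \prod_l \sum_{j'_{il} \in \mathbb{Z}} e^{-w_l (r_{K_l(i)l} - \mu_i + j'_{il}\Gamma)^2}\, d\mu_i \;\propto\; \int_{-\infty}^\infty e^{-\sum_l w_l (x - \tilde{r}_{K_l(i)l})^2}\, dx,
\end{equation*}
with $w_l = 1/(2\sigma_l^2)$.

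For the key step, I would use the cutting point $\tau$ guaranteed by Assumption 1, together with Lemmas 1--2, to unfold the circle onto the real axis via Definition 1; after the shift, all of $\tilde{r}_{K_l(i)1}, \ldots, \tilde{r}_{K_l(i)L}$ lie in a single sub-interval of length $\Omega_i < \Gamma/2$. Substituting the unfolded variable $x = \tilde{\mu}_i$, the crucial observation is that because this cluster has width strictly less than $\Gamma/2$, a shift $x \mapsto x + k\Gamma$ of the integration variable must be absorbed by a \emph{common} increment $j'_{il} \mapsto j'_{il} + k$ across all $l$ in order to keep every exponent within the same lobe. Once the integration range is extended from the length-$\Gamma$ fundamental domain to $(-\infty, \infty)$, only the diagonal tuples $(j'_{i1}, \ldots, j'_{iL}) = (k, \ldots, k)$ survive and the extension absorbs the outer sum over $k$; the remaining single term reads $r_{K_l(i)l} - \mu_i + j'_{il}\Gamma = \tilde{r}_{K_l(i)l} - x$, producing the Gaussian integral on the right-hand side of the lemma.

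The main obstacle is making this diagonal-extension step rigorous --- verifying that the product-of-sums on $[0,\Gamma)$ equals the single integration on $\mathbb{R}$ not merely up to exponentially small error but exactly up to the $\textbf{K}$-independent proportionality that MAP tolerates. The plan there is to partition the fundamental domain into sub-intervals on each of which exactly one tuple $(j'_{i1}, \ldots, j'_{iL})$ keeps every summand inside the cluster of length $\Omega_i$, and then to reassemble these sub-intervals as $\Gamma\mathbb{Z}$-translates of a single window so that the extension to $\mathbb{R}$ is a tiling identity rather than an approximation; the width bound $\Omega_i < \Gamma/2$ enters precisely to guarantee that this tiling is non-overlapping and bijective.
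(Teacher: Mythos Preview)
Your approach is correct and follows the same route as the paper: use Assumption~1 to force the wrap indices onto the diagonal $(j'_{i1},\ldots,j'_{iL})=(j,\ldots,j)$, then combine the sum over $j$ with the integral of $\mu_i$ over $[0,\Gamma)$ into a single integral over $\mathbb{R}$. Where you diverge from the paper is in how you justify the diagonal collapse, and there your plan is over-engineered and the worry about exactness is misplaced.

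The point you are missing is that the collapse to the diagonal is \emph{exact}, and it is forced by the conditioning itself rather than by any tiling of the fundamental domain. Once you condition on Assumption~1, the admissible error tuples are precisely those with $\max_l\Delta_{il}-\min_l\Delta_{il}<\Gamma/2$. Writing $\Delta_{il}=\tilde r_{K_l(i)l}-\mu_i+j''_l\Gamma$ after the Definition~1 shift, any two indices $l_1\neq l_2$ with $j''_{l_1}\neq j''_{l_2}$ would give
\[
|\Delta_{il_1}-\Delta_{il_2}| \;=\; \big|(\tilde r_{K_{l_1}(i)l_1}-\tilde r_{K_{l_2}(i)l_2})+(j''_{l_1}-j''_{l_2})\Gamma\big| \;\ge\; \Gamma-\Omega_i \;>\; \tfrac{\Gamma}{2},
\]
contradicting the span bound. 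Hence every off-diagonal tuple has zero contribution to $p(\textbf r,\text{Assum }1\mid \mu_i,\textbf K)$, not merely an exponentially small one; the conditioned integrand is exactly $\sum_{j\in\mathbb Z}\prod_l e^{-w_l(\tilde r_{K_l(i)l}-\mu_i+j\Gamma)^2}$, and the substitution $x=\mu_i-j\Gamma$ gives the stated integral over $\mathbb R$ on the nose. This is the paper's argument in one line, and it removes the need for your proposed partition of $[0,\Gamma)$ into sub-intervals, which as stated is not quite right anyway (all diagonal $j$ remain compatible with Assumption~1 for every $\mu_i$, since the span is $j$-independent).

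Two further points the paper's proof handles that your proposal omits: it opens by checking that the right-hand side is invariant under the choice of cutting point $\tau$ (different admissible $\tau$ can shift all $\tilde r_{K_l(i)l}$ for a given $i$ by a common $\pm\Gamma$, which the integral over $\mathbb R$ absorbs), and it relies on the earlier verification that $\Pr(\text{Assum }1\mid\textbf K)$ is $\textbf K$-independent so that the passage from $p(\textbf r,\text{Assum }1\mid\textbf K)$ to $p(\textbf r\mid\text{Assum }1,\textbf K)$ preserves the MAP comparison. You should state both explicitly.
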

 where $w_l$ is the weight determined by $\sigma_l$.

 \begin{proof}
 First we need to clarify when there are multiple cutting point candidates given $r_{il}$ and $K_l$, (\ref{MAP}) is invariant with any selection of $\tau$. For each $i$, since for any possible $\tau \not \in I_i$, the relative position of $\tilde{r}_{K_l(i)l}$ correspondingly defined does not change, which is proved in Lemma 1. The only difference is that there may be a uniform shift on $\tilde{r}_{K_l(i)l}$, i.e., different cutting points may result in two different groups $\{\tilde{r}_{K_l(i)l}\}$ and $\{\tilde{r'}_{K_l(i)l}\}$ but $|\{\tilde{r}_{K_l(i)l}\}-\{\tilde{r'}_{K_l(i)l}\}|=\Gamma$ for each $l$. Substitute both $\{\tilde{r}_{K_l(i)l}\}$ and $\{\tilde{r'}_{K_l(i)l}\}$ to (\ref{MAP}), the formula does not change due to the integral on the variable along the whole real axis.  
 
 For each $i$ and a fixed $\mu_i \in [0,\Gamma)$, with the assumption of $I_i$, the order of $l$ such that $\mu_i+\Delta_{k_l(i)l}$ are in an ascending order equals that of $\tilde{r}_{K_l(i)l}$ sorted non-decreasingly. Therefore, the errors $\{ \Delta_{K_l(i)l}, l=1,2,...,L\}$ must be in a form $\{\mu_i-\tilde{r}_{K_l(i)1}+j\Gamma,\mu_i-\tilde{r}_{K_l(i)2}+j\Gamma, ... ,\mu_i-\tilde{r}_{K_l(i)L}+j\Gamma\}$, where $j \in \mathbb{Z}$. On the other hand, since $\mu_i$ are i.i.d. uniformly distributed in $[0,\Gamma)$, putting things together, for each $i$, under Assumption 1 and classification, the probability of such generated samples $r_{il}$ is $\int_{-\infty}^{\infty} e^{-\sum_{l=1}^{L} w_l (x-\tilde{r}_{K_l(i)l})^2} dx$. Due to the independence of $\mu_i$, (\ref{MAP}) follows. 
 
 \end{proof}

For any possible \textit{proper~classification}, with Lemma 3, we have a closed form of  (\ref{huahua}). Indeed, there exists an efficient scheme to find out the optimal solutions of  (\ref{huahua}).  Before proceeding, we introduce the following notations for clarity. For any given $\tau \in [0,\Gamma)$, let $\gamma_{(i)l}$ denote the $i^{th}$ order of $\tilde{r}_{il}$, for each $l \in \{1,2,...,L\}$ sorted in an ascending order.
 \begin{thm}
 The MAP estimation of clustering under Assumption 1 is to find out a cutting point $\tau \in \{r_{il}\}$ such that 
 \begin{equation}
 \label{thm2}
 \arg \min_{\tau}  \sum_{i=1}^{N}  (\sum_{l=1}^{L} \gamma_{(i)l} w_l)^2
 \end{equation}
 \end{thm}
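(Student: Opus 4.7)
The strategy is to evaluate Lemma~3's Gaussian integral in closed form, argue that Assumption~1 forces the MAP-optimal clustering (given $\tau$) to be the rank-matching $\tilde r_{K_l(i),l}=\gamma_{(i)l}$, and then show that the resulting scalar problem in $\tau$ collapses onto $O(NL)$ candidates of the stated form.

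First, I would reduce the optimization over $\textbf{K}_{[1:L]}$ to one over $\tau$ alone. Under Assumption~1 the $N$ sample clouds lie in disjoint arcs on the circle, and after cutting at any proper $\tau$ and applying Definition~1 they become $N$ disjoint intervals on the real line appearing in the same left-to-right order across every sampling $l$ (this is exactly Lemma~1). The only permutation with nonzero conditional probability is therefore the one pairing the rank-$i$ point in sampling $l$ with cluster $i$, so that for the MAP-optimal $\textbf{K}$ one has $\tilde r_{K_l(i),l}=\gamma_{(i)l}$.

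Next, for each $i$ I would complete the square in $x$ with $W=\sum_l w_l$, obtaining
\begin{equation*}
\int_{-\infty}^{\infty} e^{-\sum_l w_l (x-\gamma_{(i)l})^2}\,dx=\sqrt{\pi/W}\,\exp\!\left[\frac{(\sum_l w_l\gamma_{(i)l})^2}{W}-\sum_l w_l\gamma_{(i)l}^2\right]\!.
\end{equation*}
Plugging back into Lemma~3 and taking the logarithm, the MAP objective, up to additive constants independent of the clustering, becomes the negative of the total within-cluster weighted sum of squares $-\sum_i\bigl[\sum_l w_l\gamma_{(i)l}^2-(\sum_l w_l\gamma_{(i)l})^2/W\bigr]$.

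Finally, I would restrict $\tau$ to $\{r_{il}\}$ by noting that the shifts in Definition~1 (and thus every $\gamma_{(i)l}$) are piecewise constant on each open arc between consecutive $r_{il}$, yielding at most $O(NL)$ distinct candidates. To further reduce the SSR to the stated form $\sum_i(\sum_l w_l\gamma_{(i)l})^2$, I would expand $\tilde r_{il}^2=r_{il}^2+\mathbf{1}[r_{il}>\tau]\bigl(\Gamma^2-2\Gamma r_{il}\bigr)$ and verify that, as $\tau$ moves between two proper cuts, the $\tau$-dependent piece of $\sum_{i,l}w_l\gamma_{(i)l}^2$ exactly cancels $1/W$ times the $\tau$-dependent piece of $\sum_i(\sum_l w_l\gamma_{(i)l})^2$; improper cuts are then ruled out because they induce a classification violating Assumption~1 and hence carry zero conditional probability under Lemma~3. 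The main obstacle I anticipate is exactly this bookkeeping step: tracking which $r_{il}$ flip by $-\Gamma$ as $\tau$ sweeps across each candidate, certifying the claimed cancellation on the set of proper cuts, and confirming that the $O(NL)$ sweep indeed covers every distinct clustering induced by the gaps between the intervals $I_i$.
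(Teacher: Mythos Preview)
Your reduction in step~1 contains a genuine gap. You assert that under Assumption~1 ``the $N$ sample clouds lie in disjoint arcs on the circle,'' and hence that the rank-matching permutation is the \emph{only} one with nonzero conditional probability. But Assumption~1 does not say the intervals $I_i$ are disjoint: it only requires a common point $\tau$ avoiding all of them and each $\Omega_i<\Gamma/2$. The $I_i$ can overlap freely. For instance, with $\Gamma=100$, $\mu_1=40$, $\mu_2=50$ and errors in $[-20,20]$, one gets $I_1=[20,60]$ and $I_2=[30,70]$, which overlap yet admit the cut $\tau=80$. In such situations many distinct permutations are \emph{proper classifications} in the paper's sense, so Lemma~3 assigns them all positive probability and your uniqueness claim fails. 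Lemma~1, which you invoke, is a statement about the \emph{true} classification only; it does not rule out other proper ones.

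The paper fills exactly this gap with the Rearrangement Inequality: after completing the square (your step~2, which matches the paper), the term $\sum_{i,l}w_l\tilde r_{K_l(i)l}^2$ is constant across all classifications sharing a given $\tau$, so maximizing the posterior reduces to maximizing $\sum_i\bigl(\sum_l w_l\tilde r_{K_l(i)l}\bigr)^2$. Rearrangement then shows the sorted pairing $\tilde r_{K_l(i)l}=\gamma_{(i)l}$ is optimal among \emph{all} permutations, not merely the unique surviving one. This is the key idea your proposal is missing. Your step~5 ``cancellation'' bookkeeping, by contrast, is not how the paper handles the comparison across different $\tau$; the paper simply enumerates the $NL$ candidate cuts and compares the resulting values, without attempting to prove the quadratic term $\sum_{i,l}w_l\gamma_{(i)l}^2$ drops out.
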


\begin{proof}
For any \textit{proper~classification} $\textbf{K}_{[1:L]} = (K_1,K_2,...,K_L)$, (\ref{MAP}) can be further simplified as,
\begin{equation}
\begin{aligned}
\label{combine}
 & \prod_{i=1}^{N} \int_{-\infty}^{\infty} e^{-[ (\sum_{l=1}^{L} w_l) x^2 -2\sum_{l=1}^{L}\tilde{r}_{K_l(i)l} w_lx+ \sum_{l=1}^{L} w_l \tilde{r}^2_{K_l(i)l}]} dx\\
&=\prod_{i=1}^{N} \int_{-\infty}^{\infty} exp [- (\sum_{l=1}^{L} w_l x-\frac{\sum_{l=1}^{L}\tilde{r}_{K_l(i)l} w_l}{\sum_{l=1}^{L} w_l})^2  \\
&- (\frac{\sum_{l=1}^{L}\tilde{r}_{K_l(i)l} w_l}{\sum_{l=1}^{L} w_l})^2+ \sum_{l=1}^{L} w_l \tilde{r}^2_{K_l(i)l}]  dx\\
& \propto \sum_{i=1}^{N} [- (\frac{\sum_{l=1}^{L}\tilde{r}_{K_l(i)l} w_l}{\sum_{l=1}^{L} w_l})^2+ \sum_{l=1}^{L} w_l \tilde{r}^2_{K_l(i)l}]\\
% & = \sum_{i=1}^{N} \sum_{l=1}^{L} \frac{\hat{s}^2_{k_l(i)l} w^2_l}{\sum_{l=1}^{L} w_l} + 
\end{aligned}
\end{equation}

Given $r_{il}$, for any $\textbf{K}_{[1:L]} = (K_1,K_2,...,K_L)$ which can result in a same $\tau$, they will share the same $\tilde{r}_{il}$ and therefore in  (\ref{combine}) $\sum_{i=1}^{N}\sum_{l=1}^{L} w_l \tilde{r}^2_{K_l(i)l}$ is a constant and we only need to focus on 
\begin{equation}
\label{final}
 \sum_{i=1}^{N}  (\sum_{l=1}^{L}\tilde{r}_{K_l(i)l} w_l)^2
\end{equation}
For the rest, we first prove that the clustering following the rule that grouping $\{\gamma_{(i)1},\gamma_{(i)1},...,\gamma_{(i)1}\}$ for each $i$ achieves the maximum of (\ref{final}). This is a generalization of the following inequality. For two pairs of numbers $a_1 \leq a_2$ and $b_1 \leq b_2$, then 
\begin{equation}
{(a_1+b_1)}^2+{(a_2+b_2)}^2 \geq {(a_1+b_2)}^2+{(a_2+b_1)}^2
\end{equation}
Now in general, for $2$ sequences, each of $N$ numbers: $\{\gamma_{(1)1},\gamma_{(2)1},...,\gamma_{(N)1}\}$ and $\{\gamma_{(1)2},\gamma_{(2)2},...,\gamma_{(N)2}\}$ and both sorted in a non-decreasing order, respectively, then the Rearrangement Inequality \cite{inequalities} tells that 
\begin{equation}
\label{inequ}
\begin{aligned}
\gamma_{(K(1))1}\gamma_{(1)2} &+ \gamma_{(K(2))1}\gamma_{(2)2}+...\gamma_{(K(N))1}\gamma_{(N)2}  \\
& \leq \gamma_{(1)1}\gamma_{(1)2} + \gamma_{(2)1}\gamma_{(2)2}+...\gamma_{(N)1}\gamma_{(N)2}
\end{aligned}
\end{equation}
where $K$ can be any permutation on $\{1,2,...,N\}$. Said another way, the maximum achieves when the order is preserved. With (\ref{inequ}), the optimal value of (\ref{final}) is clear:  $\sum_{i=1}^{N}  (\sum_{l=1}^{L} \gamma_{(i)l} w_l)^2$, as we claimed, of which the optimal classification is obviously a $proper~classification$ with the cutting point $\tau$ assumed. 

Since there are $NL$ many candidates of \textit{cutting~point}, and what we prove above presents the local optimal classification for all \textit{proper~classification} which shares a same $\tau$. Therefore in the worst case, by enumerating all the $NL$ candidates of \textit{cutting~point}, i.e., each $r_{il}$, we can find the final optimal solution to (\ref{huahua}). 
\end{proof}
To conclude, the complexity of finding out the MAP for clustering under Assumption 1 is reduced to finding out the optimal $\tau$ from $NL$ many candidates $r_{il}$.  We conclude the proposed algorithm as follows.

\begin{algorithm}
\caption{Conditional MAP Estimation of Classification}
\textbf{Input:} Given moduli $m_l=\Gamma M_l$ and the residues observed $R_{il}, i=1,2,...,N, l=1,2,...,L$. 

1. Calculate $ {r}_{il} = \langle  R_{il} \rangle_{M_l}$;

2. Begin iteration $t$ from 1 to $NL$ for each $ {r}_{il}$. For notation clarity, let $  {r}_{i_0l_0}$ denote the residue selected in an iteration.
    \begin{itemize}
    \item For each $i$ and $l$, if ${r}_{il} > {r}_{i_0l_0}$, 
    \begin{equation}
    \widetilde{r}_{il} = {r}_{il}  - \Gamma.
    \end{equation}
    Otherwise $\widetilde{r}_{il} = {r}_{il} $.
    \item Let $\widetilde{r}_{(i)l}$ denote that it is the $i^{th}$ largest elements among all $\{\widetilde{r}_{il},i=1,2,...,N\}$. Derive the permutation $K_l$ that $ K_l(i) = (i), l=1,2,...,L$, and calculate (\ref{final}) under the classification.
       \end{itemize}
        
3. Find out the $\tau$ such that (\ref{thm2}) achieves the minimal. Applying the conventional RCRT for a single number \cite{mle1}, \cite{closed} for the residues in $C^{\tau}_{i}$, and $\hat{Y}_i$ is thus consecutively obtained. 

\textbf{Output:} $\hat{Y}_i$, $i=1,2,...,N$. 
\end{algorithm}

\section{Algorithm Two: Bayesian Wrapped Gaussian Mixture Model and Two-Step Maximization Fast Algorithm}
\noindent In the last section, we discussed a conditional MAP of residue classification. It is noted that after permutation estimators are determined, for final reconstruction, we still need to use conventional RCRT. Conventional RCRT relies its key estimation on $\bm{\mu}_{[1:N]}=(\mu_{1},\mu_{2},...,\mu_{N})$, an random variable we integrate out during our estimation for permutation variable $\bm{K}$. It is therefore inspiring that whether we can estimate both permutation $\textbf{K}_{[1:L]}$ and common residue $\bm{\mu}_{[1:N]}$ at the same time.

In this section, we develop a two-step searching algorithm to figure out the defined MAP. Coming with a slight compromise in computational complexity, Algorithm 2 is shown to achieve stronger robustness comparing to Algorithm 1.

To begin, if we further place the problem onto the "small circle" modulo $\Gamma$, we would find its similarity to the Gaussian Mixture Model (GMM), where the differences lie on the wrapped gaussian distribution for noisy observations of $\textbf{Y}_{[1:N]}$, and instead of a clustering problem, our missing data permutation variable $\bm{K}$ makes it a matching problem. 

In this session, we will treat both $\bm{\mu}_{[1:N]}$  and $\textbf{K}_{[1:L]}$ as target variables, and conduct a MAP estimation for both variables at same time, whose objective function becomes: 

\begin{equation}
\begin{aligned}
&\bm{\hat{K}}_{[1:L]}, \bm{\hat{\mu}}_{[1:N]}\\
& :=arg_{\bm{K}_{[1:N]}, \bm{\mu}_{[1:N]}}\max  \Pr (\bm{K}_{[1:N]}, \bm{\mu}_{[1:N]} |  \textbf{r}_{[1:L]}  )  \\
&\propto arg_{\bm{K}_{[1:N]}, \bm{\mu}_{[1:N]}}\max \Pr (\textbf{r}_{[1:L]}  |  \bm{K}_{[1:N]}, \bm{\mu}_{[1:N]}) \\ 
\end{aligned}
\end{equation}

We propose an iterative algorithm to solve the above problem. It goes as following: after initializing a certain $\bm{\mu}_{[1:N]}^{(0)}$, for $(t+1)^{th}$ iteration, the maximization is broken into two steps: 
\begin{itemize}
	\item Step One: knowing $\bm{\mu}_{[1:N]}^{(t)}$, deducing:
	\begin{equation}
	\label{StepOne}
	\bm{\hat{K}}_{[1:L]}^{(t+1)} = arg_{\bm{K}_{[1:N]}}\max \Pr (\textbf{r}_{[1:L]}  | \bm{K}_{[1:N]}, \bm{\mu}_{[1:N]}^{(t+1)})
	\end{equation}
	\item Step Two: knowing $\bm{K}_{[1:N]}^{(t +1)}$, deducing:
	\begin{equation}
	\bm{\hat{\mu}}_{[1:L]}^{(t+1)} = arg_{\bm{\mu}_{[1:N]}}\max \Pr (\textbf{r}_{[1:L]}  | \bm{K}_{[1:N]}^{(t+1)}, \bm{\mu}_{[1:N]})
	\end{equation}
\end{itemize}

In the remaining part of this section, we will propose a fast algorithm to deal with each step and prove its convergence to local minimum. In Section 5, we will provide simulation results for this algorithm, where the convergence speed as well as simulation accuracy will be further demonstrated. 

Worth mentioned, in algorithm and simulations below, we will always initialize $\bm{\mu}_{[1:N]}^{(0)}$ as the $l^{th}$ set of observation $\bm{r}_{[1:N],l} = \{r_{1,l}, r_{2,l},...,r{N,l}\}$, where $l$ is a randomly drawn from $\{1,2,...,L\}$. As our missing data has its nature as permutation, this initialization is relatively good guess. 

We start from deducing a fast algorithm for Step One. Similar to equation (\ref{deduction2}), we have 

\begin{equation}
\label{target two}
\begin{aligned}
&\Pr(\textbf{r}_{[1:L]}| \bm{K}_{[1:L]}, \bm{\mu}_{[1:N]} ) \\
& \propto \prod_{l = 1}^{L} \prod_{i = 1}^{N} \sum_{j_{il} = -\infty}^{\infty} p(j_{il}  \Gamma +  r_{il} | k_i \Gamma + \mu_{K_{l}}(i)  )  \\
&\propto \prod_{l = 1}^{L} \prod_{i = 1}^{N} \sum_{j_{il} = -\infty}^{\infty} \frac{1}{\sqrt{2\pi}\sigma_l} e^{\frac{-(r_{il} - \mu_{K_{l}}(i)  + (j_{il} - k_i )\Gamma)^2}{2\sigma_l^2}} \\
 &\propto \prod_{l = 1}^{L} \prod_{i = 1}^{N} \sum_{j'_{il} = -\infty}^{\infty}\frac{1}{\sqrt{2\pi}\sigma_l} e^{\frac{-(r_{il} - \mu_{K_{l}}(i)  + j'_{ij}\Gamma)^2}{2\sigma_l^2}} 
 \end{aligned}
 \end{equation}

Since $K$ are independently drawn random permutation, we may simplify the problem in (\ref{target two}) into solving an optimal $K_l^{(t+1)}$ for each $l$:

\begin{equation}
\label{target two further}
\begin{aligned}
&K_l^{(t+1)} := \arg_{K_l}\max \prod_{i = 1}^{N}\sum_{j'_{il} = -\infty}^{\infty} e^{\frac{-(r_{il} - \mu_{K_{l}}(i) + j'_{ij}\Gamma)^2}{2\sigma_l^2}} 
\end{aligned}
\end{equation}

In general, (\ref{target two further}) is hard to optimize, we therefore introduce a similar approximation as in \cite{mle1}. We define $d_{\Gamma} (a,b) := \min_{j} |a-b+j\Gamma |$ for any two real numbers $a,b$. When the variance is much smaller than $\Gamma$, (\ref{target two further}) can be approximated as 
\begin{equation}
\label{circle1}
\arg_{K_l} \min \sum_{i=1}^{N} d^2_{\Gamma} (r_{il}, \mu_{K_l(i)})
\end{equation}

Although still with seemingly exponential complexity, in the following theorem, we propose an algorithm that solves (\ref{circle1}) in $O(N)$ complexity. For convenience, we continue using the same notation as in Section 3, and let $(i)$, $i=1,2,...,N$, denote a permutation of sequence $r_{[1:N,l]}$ such that the sequence $\{ r_{(1)l}, r_{(2)l}, ... ,r_{(N)l} \}$ is increasingly sorted. We also further denote $[i]$ a permutation of sequence $\hat{\mu}_{[1:N]}$ such that the sequence $\{ \hat{\mu}_{[1]}, \hat{\mu}_{[2]}, ... ,\hat{\mu}_{[N]}\}$ is also non-decreasingly sorted in $[0,\Gamma)$. 

\begin{thm} 
	\label{circlemin1}
	There exist some $\zeta \in \{1,2,...,N\}$ such that the pairings $(\hat{\mu}_{[i]}, r_{(\langle i+\zeta\rangle_N ) l })$ achieves the optimal in (\ref{circle1}). 
\end{thm}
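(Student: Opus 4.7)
The plan is to prove a cyclic version of the Rearrangement Inequality for squared distances on the circle. Since both sequences $\{r_{(i)l}\}_{i=1}^N$ and $\{\hat{\mu}_{[i]}\}_{i=1}^N$ are already arranged in increasing cyclic order on $[0,\Gamma)$, and $d_\Gamma$ is the wrapped distance, I expect the optimal matching to preserve the cyclic order of both sequences. This cuts the $N!$ permutations down to the $N$ that are pure cyclic shifts, which is exactly what the parameter $\zeta$ indexes.

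First, I would reduce the problem to a "non-crossing" statement: view each pair $(\hat{\mu}_{[a]}, r_{(p)l})$ as a chord on the circle, taken along whichever of the two circular arcs realizes $d_\Gamma(\hat{\mu}_{[a]}, r_{(p)l})$ (the shorter arc). I would then prove that if an optimal permutation $K_l$ contains two crossing chords, swapping the two pairs strictly decreases (or leaves unchanged) the objective in (\ref{circle1}). Concretely, for any four points on the circle whose two "crossing" chords lie inside an arc of length at most $\Gamma$, I would unwrap that arc onto the real line: on the line, for $a<b<c<d$ the inequality $(a-c)^2+(b-d)^2 > (a-d)^2+(b-c)^2$ would be false; the correct planar analogue says that the order-preserving pairing beats the crossing pairing, so the swap can only help. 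This is the same Rearrangement Inequality used in Theorem 1, applied locally after unwrapping.

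Given the non-crossing property, a matching between two cyclically sorted sequences of equal length must be a rigid rotation: once one pair $(\hat{\mu}_{[1]}, r_{(\zeta+1)l})$ is fixed, non-crossingness forces $\hat{\mu}_{[i]}$ to be paired with $r_{(\langle i+\zeta\rangle_N)l}$ for every $i$. This is a short combinatorial argument: going around the circle from $\hat{\mu}_{[1]}$ in the positive direction, one encounters the $\hat{\mu}_{[i]}$'s in order, and a non-crossing matching must encounter the paired $r$'s in the same cyclic order. Thus the optimal permutation $K_l$ belongs to the family indexed by $\zeta \in \{1,2,\ldots,N\}$, and the theorem follows.

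The main obstacle will be the unwrapping step in the swap lemma, because on a circle two chords may "cross" in a way that the four endpoints cannot be placed on a single short arc of length $<\Gamma$. I expect to handle this by case analysis: if the four endpoints span the full circle, then at least one of the two chords must use the \emph{long} arc, but by definition $d_\Gamma$ picks the short arc, so the squared-distance objective only depends on the shorter arc lengths; a careful bookkeeping of which arc each $d_\Gamma$ picks before and after the swap reduces every case to the linear inequality above. Once that technical lemma is in place, the rest of the argument is a direct consequence of non-crossingness and cyclic sortedness.
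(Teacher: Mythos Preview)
Your proposal is correct and takes essentially the same approach as the paper: prove by a swap/exchange argument that any optimal matching must be non-crossing (hence a cyclic shift of one sorted list against the other), handling the circular geometry by case analysis. The paper organizes that case analysis by the signs of the signed angular displacements $\xi_{i_1,K(i_1)}$ and $\xi_{i_2,K(i_2)}$ (four cases, reducing to two by symmetry) and verifies each with a direct quadratic identity, which is exactly the computation your ``unwrap and apply the planar Rearrangement Inequality'' step would produce.
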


\begin{proof}
	Since each point over the circle can be uniquely represented by one argument, let $\theta_{i}, \omega_{i} \in [0,2\pi)$ respectively denote the arguments for $\hat{\mu}_{[i]}$ and $r_{(i)l}$. Clearly, $\theta_{i}$ and $\omega_{i}$ are both in an ascending order. Correspondingly, the distance between any $(\hat{\mu}_{[i_1]}, r_{(i_2)l})$ pair is proportional to $\min \{|\omega_{i_2}-\theta_{i_1}|, 2\pi-|\omega_{i_2} - \theta_{i_1}|\}$. Here we use $\xi_{i_1, i_2}$ to represent a directed argument associated with this distance, which is in $(-\pi,\pi]$. To give its geometrical interpretation, consider two concentric circles, as shown in Fig. \ref{concentric}, where $\hat{\mu}_{[i]}$ are distributed on the outer circle and $r_{(i)l}$ are on the inner. Here $(i)$ and $[i]$ denote a permutation on the index $\{1,2,...,N\}$ and they are ordered clockwise. Then $\xi_{i_1, i_2}$ is defined to represent a displacement from $\theta_{i_1}$ to $\omega_{i_2}$ taking the clockwise direction as positive. Imagine that there is a connection between each pairing of the optimal choice $(\hat{\mu}_{[i_1]}, r_{(i_2)l })$, we prove those $N$ lines have no intersection. 
	
	We prove it by contradiction. Suppose there exists $i_1$ and $i_2$ such that the two lines $\xi_{i_1, K(i_1)}$ and $\xi_{i_2, K(i_2)}$ are crossed. Without loss of generality, we take $\theta_{i_1}=0$, as there is an invariance of uniform shifting on the assignment of $\theta_{i}$ and $\omega_{i}$ with a zero point change. Also we take $\theta_{i_2}\in (0,\pi)$. Note that when $\theta_{i_2}=\pi$ it is impossible to result in crossing. So when two lines are crossed, it falls into one of the following four cases:
	\begin{itemize}
		\item 1). $\xi_{i_1, K(i_1)}\geq 0$ and $\xi_{i_2, K(i_2)}\geq 0$, i.e., $0 \leq \theta_{i_1}<\omega_{K(i_2)}<\omega_{K(i_1)} \leq \pi$ and $0<\theta_{i_2}<\omega_{K(i_2)}$
		\item 2). $\xi_{i_1, K(i_1)}\geq 0$ and $\xi_{i_2, K(i_2)}<0$, i.e., $0 \leq \theta_{i_1}<\omega_{K(i_1)}\leq \pi$ and $\pi+\theta_{i_2}<\omega_{K(i_2)}<2\pi$ and $0 \leq \theta_{i_2}<\pi$
		\item 3). $\xi_{i_1, K(i_1)}<0$ and $\xi_{i_2, K(i_2)}\geq 0$, i.e., $0 \leq \theta_{i_1}<\theta_{i_2}<\pi$ and $\pi<\omega_{K(i_1)}<\omega_{K(i_2)}\leq \theta_{i_2}+\pi$
		\item 4). $\xi_{i_1, K(i_1)}<0$ and $\xi_{i_2, K(i_2)}<0$, i.e., $0 \leq \theta_{i_1}<\theta_{i_2}<\pi$ and $\pi+\theta_{i_2} \leq \omega_{K(i_2)}<\omega_{K(i_1)}<2\pi$
	\end{itemize}
	
	We show the exchange of $K(i_1) \to K(i_2)$ and $K(i_2) \to K(i_1)$ makes the optimized value of (\ref{circle1}) decrease, because it decreases the following term:
	\begin{equation}
	\label{local-change}
	d^2_{\Gamma} (r_{i_1 l}, \mu_{K(i_1)}) +d^2_{\Gamma} (r_{i_2 l}, \mu_{K(i_2)}),
	\end{equation}
	and it suffices to consider $\xi^2_{i_1, K(i_1)}+\xi^2_{i_2, K(i_2)}$ since they are proportional.\\
	
	First consider case 1) and 4), which are equivalent by a shifting and reflection. For case 1), we have $\xi^2_{i_1, K(i_1)}+\xi^2_{i_2, K(i_2)} = {(\omega_{K( i_2 )}-\theta_{ i_2})}^2+{(\omega_{K( i_1)}-\theta_{ i_1})}^2$.
		\begin{equation}
		\begin{aligned}
		& {(\omega_{K(i_2)}-\theta_{i_2})}^2 +{(\omega_{K(i_1)}-\theta_{i_1})}^2 \\
		&-{(\omega_{K(i_1)}-\theta_{i_2})}^2 -{(\omega_{K(i_2)}-\theta_{i_1})}^2 \\
		&= 2(\theta_{i_2} - \theta_{i_1})(\omega_{K(i_1)}- \omega_{K(i_2)}) >0
		\end{aligned} 
		\end{equation}
		Thus cases 1) and 4) are proved. Next consider cases 2) and 3), which are equivalent also. For case 3), we have $\xi^2_{i_1, K(i_1)}+\xi^2_{i_2, K(i_2) }= {(\omega_{K(i_2)}-\theta_{i_2})}^2 +{(\omega_{K( i_1)}-\theta_{ i_1}-2\pi)}^2$. Now the exchange of $i_1$ and $i_2$ results in:
			\begin{equation}
			\begin{aligned}
			& {(\omega_{K(i_2)}-\theta_{i_2})}^2 +{(\omega_{K( i_1)}-\theta_{ i_1}-2\pi)}^2 \\
			&-{(\omega_{K(i_1)}-\theta_{i_2})}^2 -{(\omega_{K( i_2)}-\theta_{ i_1}-2\pi)}^2 \\
			&= 2(\theta_{i_2} - \theta_{i_1})(\omega_{K(i_1)}- \omega_{K(i_2)}) +4\pi(\omega_{K( i_2)}-\omega_{K( i_1)})>0
			\end{aligned} 
			\end{equation}
			where the second line comes from the fact that $\pi<\omega_{K(i_2)}\leq \theta_{i_2}+\pi<2\pi$ .
			\end{proof}

\begin{figure*}
\centering
\includegraphics[width=2 in,height=2 in]{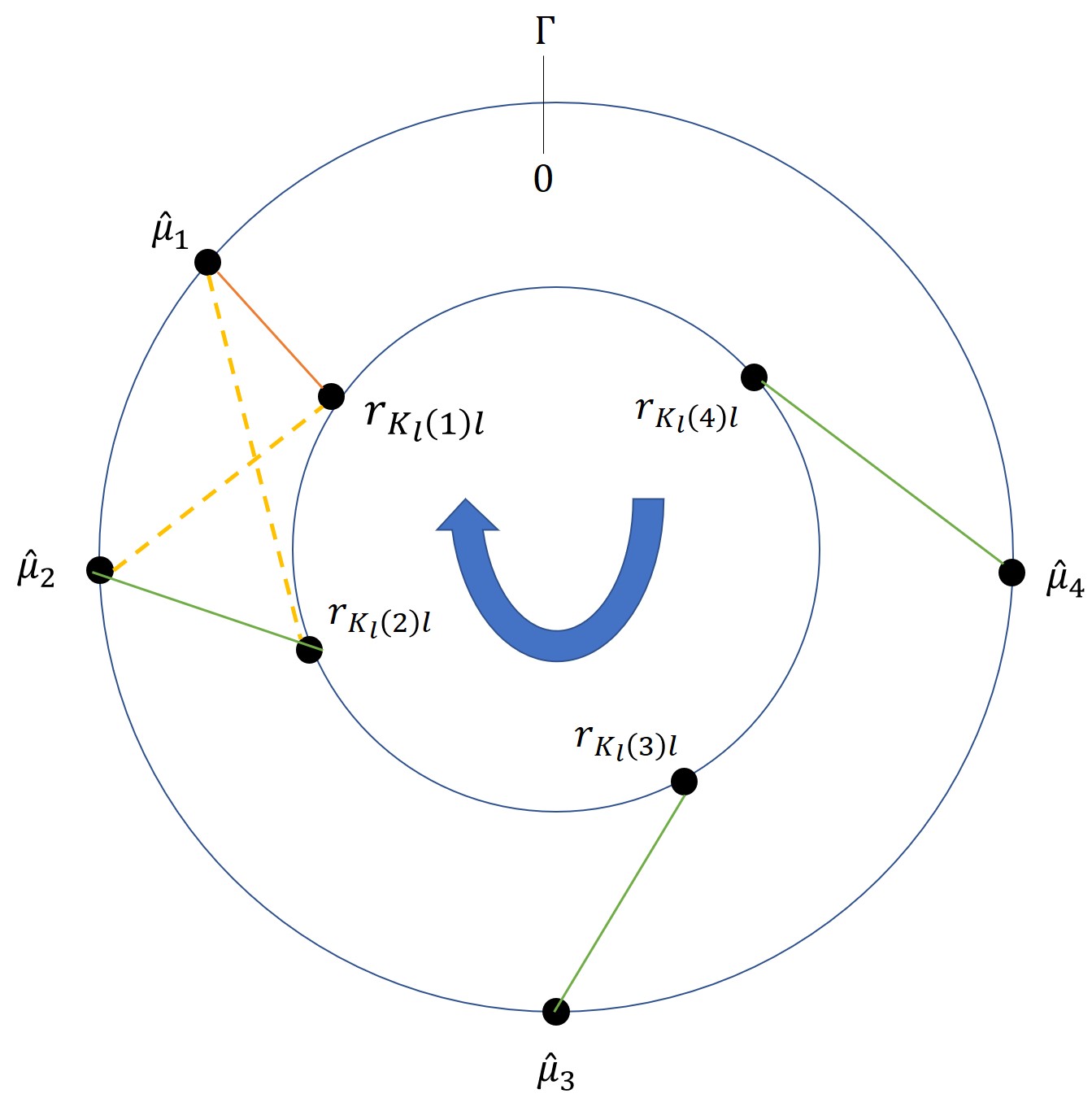}
\centering
\includegraphics[width=2 in,height=2 in]{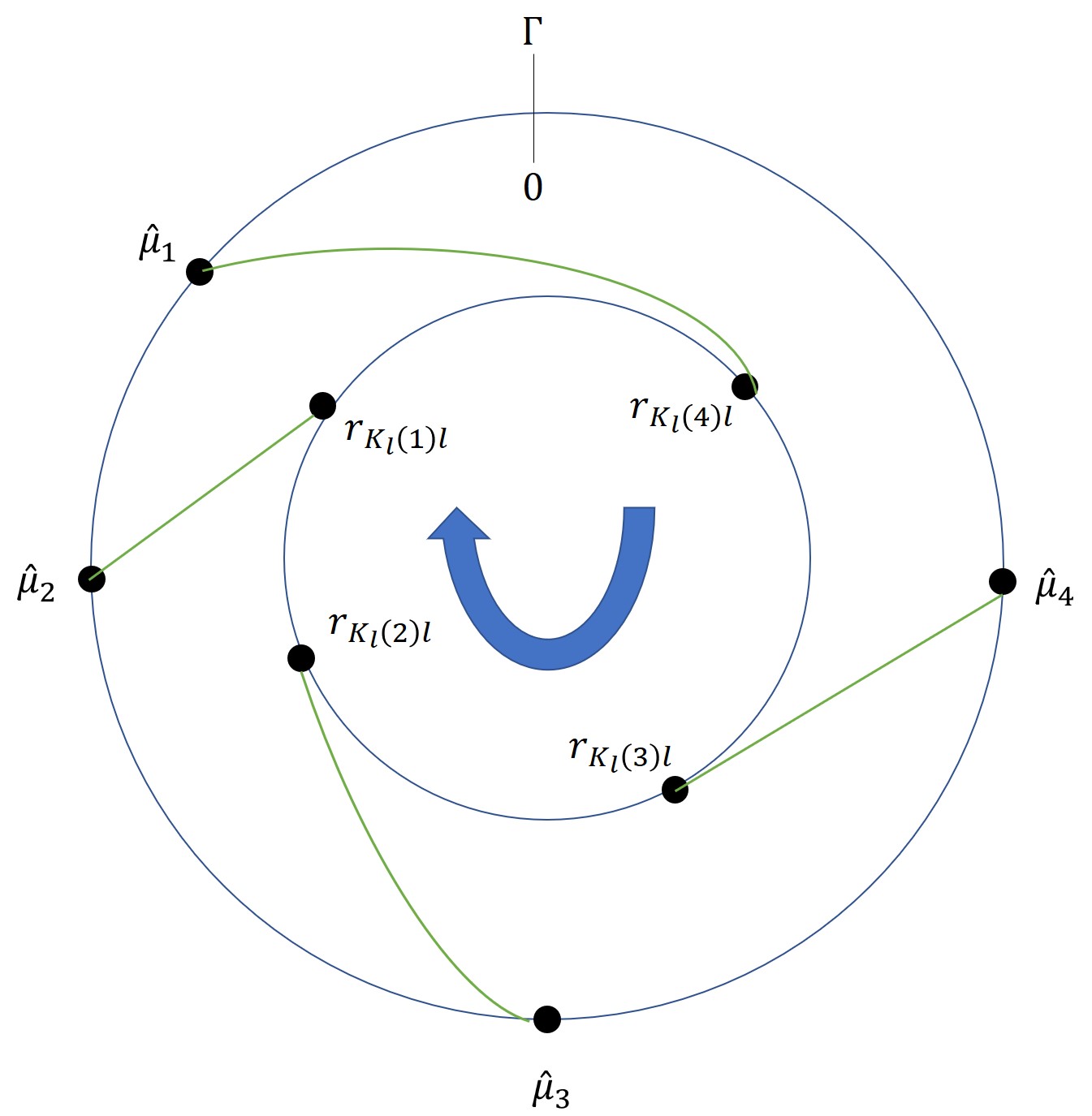}
\caption{Illustration for the Step-1 of Algorithm 2}
\label{concentric}
\end{figure*}

We now move to the second step: knowing the clustering, how to figure out the optimal common residue? It is evident that with the clustering, such estimation is reduced to $N$ independent estimation for a single common residue. This problem has been previously studied in \cite{mle1} where it is proved that the optimal estimation can be determined in $O(L)$ complexity. For completeness, we present the skeleton of \cite{mle1} as follows with a shorter proof.

With given clustering $K_l$, $l=1,2,...,L$, to figure out the optimal $\bm{\hat{\mu}}$, where 
\begin{equation}
\hat{\mu_i} = arg \min_{x \in [0,\Gamma)} \sum_{l=1}^{L} d^2_{\Gamma} (x, \mu_{k_l(i)l}) 
\end{equation}
For simplicity, we assume that $\gamma_1, \gamma_2, ... ,\gamma_L$ denote $\mu_{k_l(i)l}$ in an ascending order. In addition, we define that $\gamma_{\langle l -1 \rangle_{L}}$ and $\gamma_{\langle l +1 \rangle_{L}}$ are neighbors of $\gamma_l$ in the sense over the circle modulo $\Gamma$. Therefore, there must exist a binary variable $b_{l} \in \{0,1\}$, such that 
\begin{equation}
 \sum_{l=1}^{L} w_{l} d^2_{\Gamma} (\hat{\mu_i} , \mu_{k_l(i)l}) = \sum_{l=1}^{L} w_l (\hat{\mu_i} - \gamma_l+b_{l}\Gamma)^2
\end{equation}
since $d_{\Gamma} (\hat{\mu_i} , \gamma_l) \leq \frac{\Gamma}{2}$ in the optimal case and both $\hat{\mu_i}$ and $\gamma_l \in [0,\Gamma)$. Thus $d^2_{\Gamma} (\hat{\mu_i} , \mu_{k_l(i)l})$ must fall in one of $\{(\hat{\mu_i} -\mu_{k_l(i)l})^2, (\hat{\mu_i} - \mu_{k_l(i)l}-\Gamma)^2 \}$.  In the following, we show that for the optimal case, there must exist some $l_0 \in \{1,2,...,L\}$ such that for $l<l_0$, $b_l=1$ and for $l \geq l_0$, $b_l =0$. Since the distance between each $\hat{\mu_i}$ and $\gamma_l+b_{l}\Gamma$ is no bigger than $\frac{\Gamma}{2}$, therefore the absolute difference between any two of $\gamma_l+b_{l}\Gamma$ should also be no bigger than $\Gamma$.  First, it is not hard to observe that for any two $l_1 < l_2 \in \{1,2,...,L\}$, $b_{l_1} \geq b_{l_2}$. Otherwise, if  $b_{l_2}=1$ while $b_{l_1}=0$, $\gamma_{l_2}+b_{l_2}\Gamma - \gamma_{l_1}+b_{l_1}\Gamma = \gamma_{l_2}- \gamma_{l_1} + \Gamma > \Gamma.$ With the non-increasing property of $b_l$, the claim follows clearly. Therefore, $\textbf{b} = ( b_1, b_2, ... ,b_L)$ must fall in one the following candidates, $\textbf{v}_1 = (0,0, ... ,0,0), \textbf{v}_2 = (1,0, ... ,0,0),...,\textbf{v}_L=(1,1, ... ,1,0) \}$. On the other hand, when $\textbf{b} = \textbf{v}_j$, 
\begin{equation}
\label{M-step}
arg \min_{x \in [0,\Gamma)} \sum_{l=1}^{L} w_l (x-\gamma_l-b_l\Gamma)^2 = \frac{\sum_{l=1}^L w_l \gamma_l + \sum_{l=1}^{j} w_l\Gamma}{L}
\end{equation}
Therefore, given clustering, the optimal estimation of common residues can be determined in $O(L)$ complexity for each, and totally $O(NL)$.

As a summary, with Theorem 2, given an estimation of $\bm{\mu}$, we can figure out the optimal classification $\bm{K}_{[1:L]}$ in $O(NL)$ complexity. Relying on an estimation of $\bm{K}_{[1:L]}$, we can further determine the MAP of $\bm{\mu}$ still in $O(NL)$ complexity. Combing both, the MAP of both can be estimated by an iterative searching with two steps alternatively and we conclude the algorithm as follows.

\begin{algorithm}
\caption{MAP of Classification and Common Residues}
\textbf{Input:} Given moduli $m_l=\Gamma M_l$ and the residue observed $R_{il} $, $i=1,2,...,N$ and $l=1,2,...,L$. 

1. Calculate ${r}_{il} = \langle  \widetilde{r}_{il} \rangle_{M_l}$;

2. Start iteration $t$ starting from $1$ with an initialization for $\{\hat{\mu}_i(0), i=1,2,...,N\}$ by randomly drawing a set of observations, i.e. $\bm{r}_{[1:N],l} = \{r_{1,l}, r_{2,l},...,r{N,l}\}$ if observation set $l$ is drawn.  
3. Begin iteration: 
    \begin{itemize}
     \item Step-1: Under the assumption where common residues are $\{\hat{\mu}_i(t-1), i=1,2,...,N\}$, follow Theorem \ref{circlemin1} to determine the optimal classification, expressed by $\{K^{t}_l, l=1,2,...,L\}$. 
     
     \item Step-2: Under the assumption where the classifications are $\{K^{t}_l, l=1,2,...,L\}$, update $\{\hat{\mu}_i(t), i=1,2,...,N\}$ by solving the following optimization problem:
     \begin{equation}
\hat{\mu}_i(t) = \arg \min_{x \in [0,\Gamma)} \sum_{l=1}^{L} d^2_{\Gamma}(x, \widetilde{r}_{K^{t}_l(i)l})
\end{equation}    
    \item $t = t+1$. 
    \end{itemize}

4. Calculate 
\begin{equation}
    q_{il} = [ \frac{{R}_{K^{T}_l(i)l}-\widetilde{r}_{K^{T}_l(i)l}}{\Gamma} ]
\end{equation}
and reconstruct $Q_i$ from $q_{il}$ with moduli $M_l$ via conventional CRT.

5. Finally reconstruct $ \hat{Y}_i = Q_i\Gamma + \hat{\mu}_i(T)$. 

\textbf{Output:}  $\hat{Y}_i$, $i=1,2,...,N$.
\end{algorithm}

\section{Robustness Strengthening and Simulation}
\noindent Throughout the section, we will introduce error correcting codes to further strengthen the robustness of proposed statistical RCRTs.  As we stress before, once the classification is not perfectly correct, even if only one residue is not correctly clustered, it will ruin the whole estimation. %Thereby with respect to the classification mistakes, it will be better to be measured by Hamming weights. %On the other hand, it is noted that the two proposed schemes, beyond the last step, are searching for either MAP of classification or MAP of both classification and common residues $\mu_i$, and reconstruction is implemented finally with the assistance of the estimation.  
A natural question is that when the accuracy of classification is not met with $100 \%$, whether we can still achieve robust construction. Fortunately, one of our previous work has provided a positive answer to this problem, as the mistake in classification can be reduced to a special case of arbitrary errors in residues. Assuming that if $L$ many moduli are used where $m_l=\Gamma M_l$ are in an ascending order, and $L_0$ is the smallest positive integer $L_0 \leq L$, such that $lcm(m_1,m_2,...,m_{L_0}) = \Gamma \prod_{l=1}^{L_0} M_l> D$, then, if no error exists, the residues from $L_0$ moduli are sufficient to recover $Y_i$. Thus we have the following Theorem, 

\begin{thm} [\cite{tvt-2019}]
\label{tvt}
For given $\bm{K}_{[1:N]} = (K_1, K_2, ... ,K_L)$, when at least $L-\lfloor \frac{L-L_0}{2} \rfloor$ many residues $R_{il}$ for $X_i$ are clustered together and for each $i$, $\max_l \Delta_{il} - \min_l  \Delta_{il} < \frac{\Gamma}{2}$, then there exists a robust reconstruction scheme with output $\hat{Y}_i$ such that $|\hat{Y}_i - Y_i| \leq \frac{3\Gamma}{4}$ in $O(L-L_0+1)$ times error correction. 
\end{thm}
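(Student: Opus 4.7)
The plan is to reduce the statement to an application of the classical single-number robust CRT augmented with remainder error-correcting codes, which is precisely the content of \cite{tvt-2019}. Since the clustering $\bm{K}_{[1:L]}$ is given, the problem decouples into $N$ independent single-parameter reconstruction tasks. For each fixed $Y_i$, we have $L$ tagged residues, of which at least $L-\lfloor \frac{L-L_0}{2}\rfloor$ are genuinely residues of $Y_i$ satisfying the bounded-noise condition $\max_l \Delta_{il} - \min_l \Delta_{il} < \frac{\Gamma}{2}$, while the remaining (at most $\lfloor \frac{L-L_0}{2}\rfloor$) come from other $Y_j$'s and can be treated as residues carrying arbitrary unbounded errors.

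The first step I would take is to recall that, under moduli of the form $m_l = \Gamma M_l$ with $M_l$ pairwise coprime, the lcm condition $\Gamma \prod_{l=1}^{L_0} M_l > D$ means that any $L_0$ correctly folded quotients suffice to uniquely identify $Q_i = \lfloor Y_i/\Gamma \rfloor$ through ordinary CRT. Consequently, the $L$-tuple of quotients forms a remainder code of minimum distance $L - L_0 + 1$, so $\lfloor \frac{L-L_0}{2}\rfloor$ arbitrary quotient errors can be corrected by a standard remainder-code decoder (Goldreich--Ron--Sudan / Guruswami--Sahai--Sudan style), running in $O(L-L_0+1)$ correction rounds.

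The second step is the actual reconstruction, in two stages. First, apply the closed-form single-number RCRT of \cite{closed} to the clustered residues: since at least $L-\lfloor \frac{L-L_0}{2}\rfloor$ of them obey the bounded-noise assumption, the averaged common-residue estimator $\hat{\mu}_i$ satisfies $|\hat{\mu}_i - \mu_i| \le \frac{\Gamma}{4}$ when computed over the honest subset, and each sampling produces a candidate folding $\hat{q}_{il} = \lfloor (R_{K_l(i)l}-\hat{\mu}_i)/\Gamma\rceil$. Second, feed the possibly-erroneous sequence $(\hat{q}_{il})_{l=1}^L$ into the remainder-code decoder from step one to recover the true $Q_i$. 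The final estimate $\hat{Y}_i = Q_i \Gamma + \hat{\mu}_i$ then inherits $|\hat{Y}_i - Y_i| \le \frac{\Gamma}{2} + \frac{\Gamma}{4} = \frac{3\Gamma}{4}$, where the $\frac{\Gamma}{2}$ accommodates a potential off-by-one in the quotient reconstruction induced by misclassified residues lying near a fold boundary.

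The main obstacle is precisely controlling this boundary effect: a correctly clustered residue whose true $\mu_i$ lies near $0$ or $\Gamma$ can produce an inferred folding number off by $\pm 1$ from $\hat{\mu}_i$, even without large noise, and if we are careless this error propagates through the decoder. Handling this requires a shift-and-round normalization (subtracting $\hat{\mu}_i$ before rounding, so that each residue is evaluated relative to a consistent reference) so that the remainder-code decoder sees only \emph{malicious} disagreements, not boundary artifacts. This is the delicate point responsible for the exact $\frac{3\Gamma}{4}$ bound (as opposed to $\frac{\Gamma}{2}$) and is where I would rely most directly on the careful error analysis given in \cite{tvt-2019}.
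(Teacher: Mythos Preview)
The paper does not prove this theorem at all: it is stated with a citation to \cite{tvt-2019} and used as a black box to justify the error-correction layer in Section~V. There is no argument in the present paper to compare against.

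Your sketch is a plausible reconstruction of what the cited result contains, and the overall architecture---decouple into $N$ single-number problems, treat misclassified residues as arbitrary errors, invoke the remainder-code minimum distance $L-L_0+1$ to correct up to $\lfloor (L-L_0)/2\rfloor$ bad quotients, then combine with the common-residue estimate---is the right shape. One point to tighten: you compute $\hat{\mu}_i$ ``over the honest subset,'' but the decoder does not know which residues are honest before decoding; the actual construction in \cite{tvt-2019} must either (i) estimate $\hat{\mu}_i$ from all $L$ residues and show that the $\lfloor (L-L_0)/2\rfloor$ contaminated ones cannot move the rounded quotients on more than $\lfloor (L-L_0)/2\rfloor$ coordinates, or (ii) run the remainder-code decoder first on provisional quotients and only then refine $\hat{\mu}_i$ on the surviving coordinates. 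You gesture at this as ``the main obstacle'' and defer to the reference, which is fair, but as written the order of operations in your second step is circular. This is also where the $\tfrac{3\Gamma}{4}$ (rather than $\tfrac{\Gamma}{4}$) bound really comes from, so if you ever need to reproduce the argument rather than cite it, that is the place to be precise.
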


It is worthy mentioning that with fixed $L_0$, increasing $L$, i.e., more moduli(samples), can not be guaranteed to continue bringing benefits as a larger $L$ also degrades the classification performance. Moreover, it is also required that to achieve the robustness, beyond the correct clustering, the residue errors should also be spanned in an interval with maximum length $\frac{\Gamma}{2}$. \footnote{A stronger bound is shown in \cite{mle1} with respect to different weights. Here we just assume the weights are the same for brevity.}  As $L$ increases, the probability that the span of all errors is smaller than $\frac{\Gamma}{2}$ can be referred to (4) by setting $\delta = \frac{\Gamma}{4}$.

Such trade off, depending on $N$, $L$, $L_0$, the noise and also the desirable computation power, is too complicated to be concisely expressed but when the noise is not too large, adding redundancy properly can always improve the performance. Besides, the other advantage of the error-correction mechanism will be clear for the following majority-based estimation. 

In the rest of the section, we will show the mechanism to fully utilize the data from multiple samplers. A natural idea is that we can regroup the moduli into many sets. Then we implement Algorithm 1 or 2 on data aggregated from each set. Basically, the minimal requirement is that the data in each set should be from at least $L_{\min}$ moduli, the lcm of which is bigger than $D$ to make the final reconstruction valid. Thus from each set, it can output a group of estimations $\{\hat{Y}_i\}$. If we have $\kappa$ many such moduli sets, correspondingly we can then pick the $N$ estimations with top appearing frequency from all $\kappa N$ reconstructed numbers. It is noted that if we take all subsets of moduli of size $S \geq L_{\min}$, $\kappa \leq \binom{L}{S}$. When $2S<L$, a larger $S$ can bring more estimations and Theorem \ref{tvt} plays a key role in the trade off between $S$ and the performance for each estimation.

\begin{figure*}
\label{simu}

\centering
\subfigure[Proposed Statistical RCRT-1 and Deterministic RCRT \cite{TSP2018}]{

\begin{minipage}[t]{0.48\textwidth}
\centering
\centerline{\includegraphics[width=90mm]{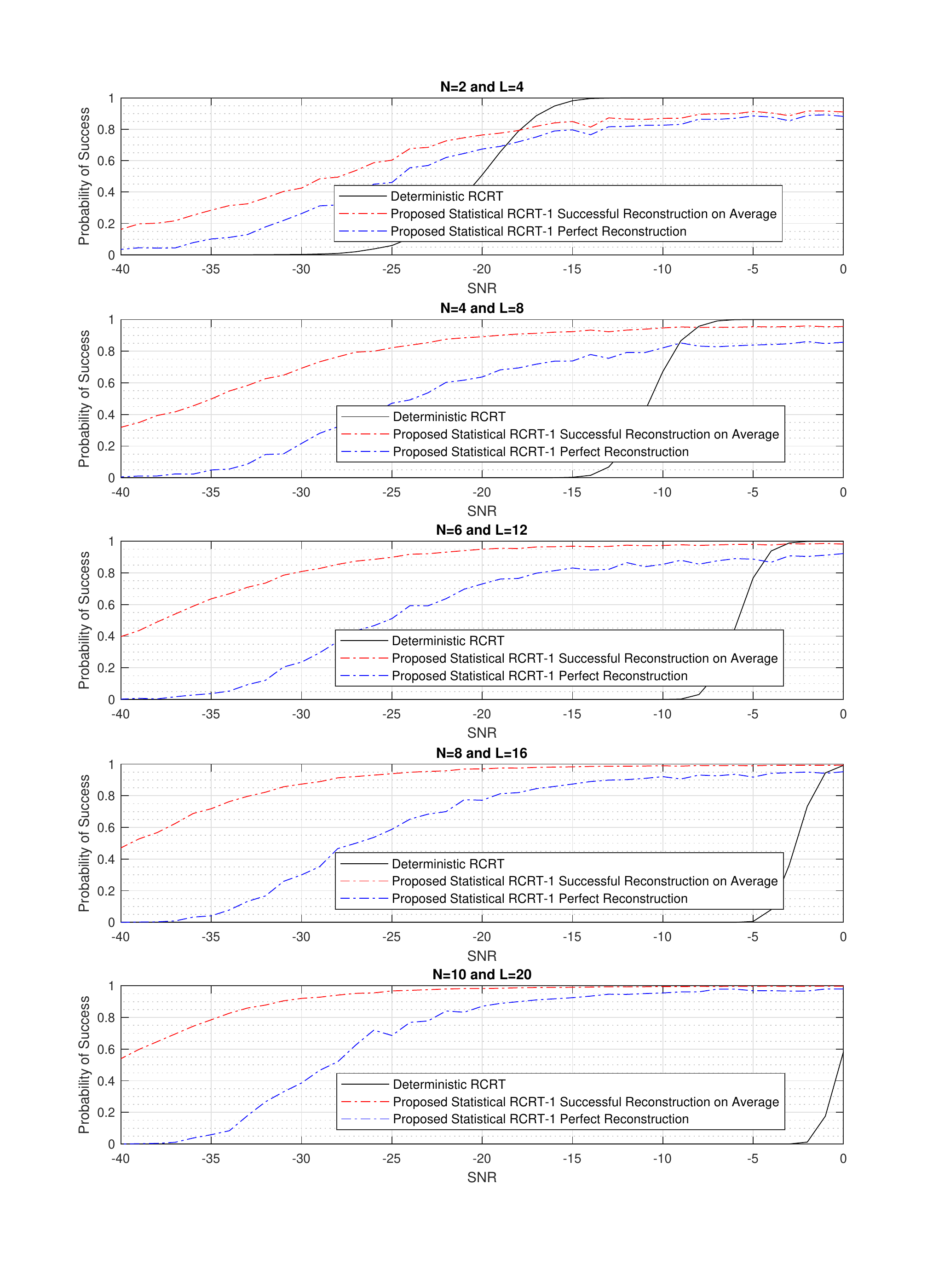}}
%\caption{Comparison between the proposed two statistical RCRT and Deterministic RCRT in \cite{TSP2018}}
\end{minipage}
}
\subfigure[Proposed Statistical RCRT-2 and Deterministic RCRT \cite{TSP2018}]{
\begin{minipage}[t]{0.48\textwidth}
\centering
\centerline{\includegraphics[width=90mm]{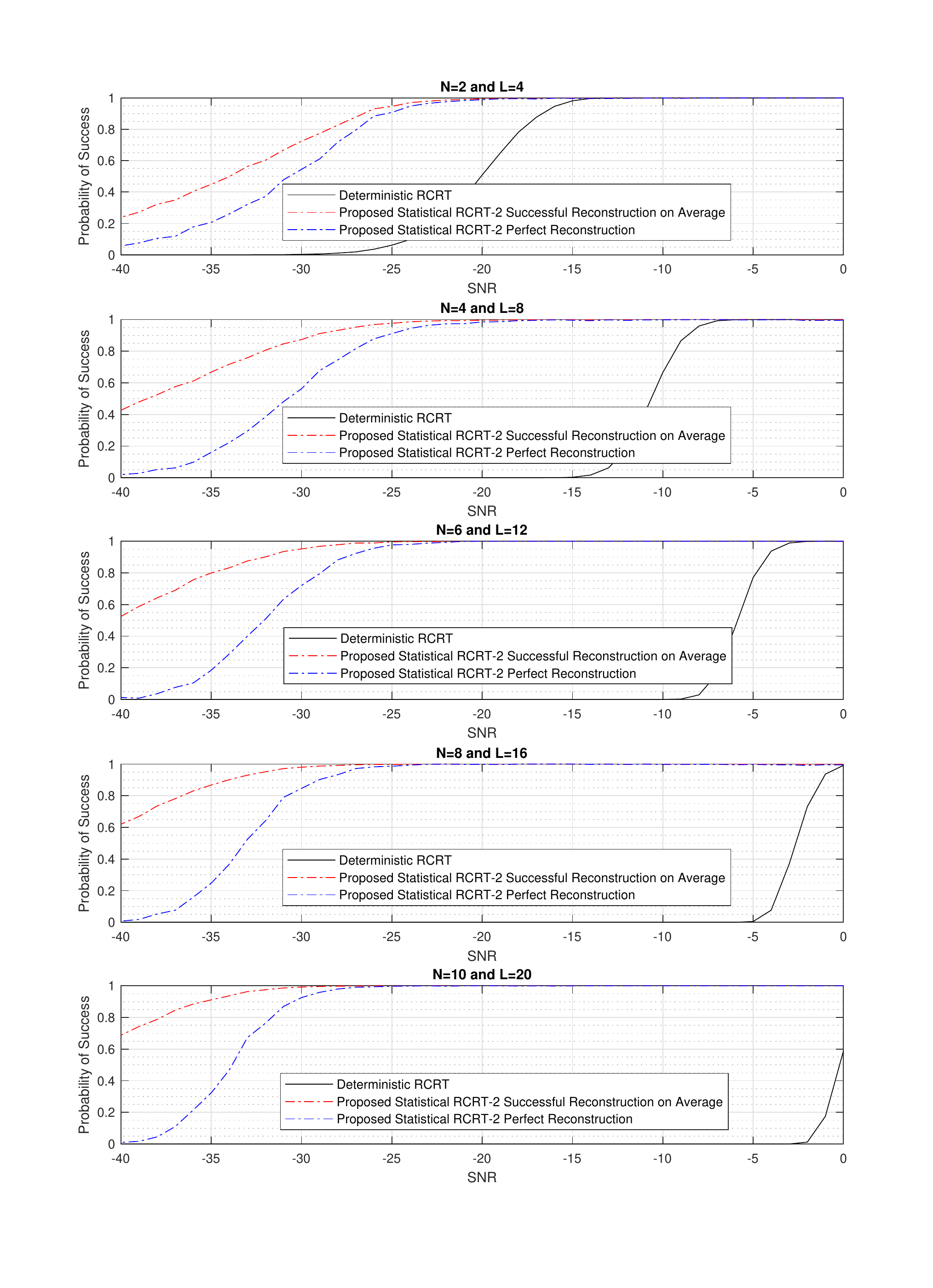}}
%\caption{Comparison between Proposed Statistical RCRT-2 and Deterministic RCRT}
\end{minipage}
}
\caption{Comparison for Success Rate of Robust Reconstruction between the proposed two statistical RCRT and Deterministic RCRT in \cite{TSP2018}}
\end{figure*}

Whereas, for two different sets, even both with perfect classification, noise may still cause a sight difference between two estimations for the same $Y_i$, which prohibits using the above idea straightforwardly. However, since the final reconstruction is formed by two parts, one to recover the quotient and the other to recover the common residues. In our protocol, we focus on the $N$ quotients with highest frequency. \footnote{Even under perfect classification, with the proposed scheme, the common residues may be uniformly shifted by $\Gamma$, depending on the choice of the cutting point $\tau$, and correspondingly, two reconstructions on the quotient $\lfloor \frac{X_i}{\Gamma} \rfloor$ may differ by $1$. However, it will not affect the final reconstruction and we can unite the two cases into one and assume they share the same estimated quotient.}  When the moduli are divided into $\kappa$ subsets without intersection, then the estimations from different groups are independent. Let $B_j$ denote the event that $\lfloor \frac {Y_i}{\Gamma} \rfloor$ can be robustly reconstructed from the $j^{th}$ group and $\Pr(B_j)$ is should be a constant. Taking $B_j$ as a Bernoulli variable, then $\sum_{j=1}^{\kappa} B_j > \frac{\gamma}{2}$ is sufficient to show that the correct reconstruction of  $\lfloor \frac{X_i}{\Gamma} \rfloor$ is one of the estimation with highest $N$ frequencies. From Chernoff bound, 

\begin{lm}[Chernoff Bound for Bernoulli Variables]  
If $E[B_j]=p>\frac{1}{2}$ for $j=1,2,...,\kappa$, then 
\begin{equation}
\label{chernoff}
\Pr( \sum_{j=1}^{n} B_j > \frac{\kappa}{2}) \geq 1-e^{-\frac{\kappa}{2p} (p-\frac{1}{2})^2},
\end{equation}
\end{lm}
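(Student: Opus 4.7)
The plan is to apply the standard multiplicative Chernoff lower-tail bound and then specialize the deviation parameter so the exponent takes exactly the claimed form. Setting $X := \sum_{j=1}^{\kappa} B_j$ with $\mu := \mathbb{E}[X] = \kappa p$, the first step is to rewrite the claim in its complementary form: it suffices to prove that $\Pr(X \leq \kappa/2) \leq \exp\!\bigl(-\tfrac{\kappa}{2p}(p-1/2)^2\bigr)$, since taking complements of this inequality recovers the stated lower bound on $\Pr(X > \kappa/2)$.

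Next, I choose the relative deviation $\delta := (2p-1)/(2p)$, which lies in $(0,1)$ because the hypothesis gives $p > 1/2$, and which is engineered so that $(1-\delta)\mu = \kappa/2$ exactly. I then invoke the classical Chernoff inequality for independent Bernoulli sums: for any $\delta \in (0,1)$,
\[
\Pr\!\bigl(X \leq (1-\delta)\mu\bigr) \;\leq\; \exp\!\left(-\frac{\mu \delta^2}{2}\right),
\]
which follows by applying Markov's inequality to $e^{-tX}$, using the moment generating function $\mathbb{E}[e^{-tB_j}] = 1-p+pe^{-t} \leq \exp(p(e^{-t}-1))$, and optimizing over $t>0$. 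Rather than rederive this, I would simply cite a textbook reference.

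Finally, substituting $\mu = \kappa p$ and the chosen $\delta$ gives
\[
\frac{\mu\delta^2}{2} \;=\; \kappa p \cdot \frac{(2p-1)^2}{8p^2} \;=\; \frac{\kappa}{2p}\,(p-\tfrac{1}{2})^2,
\]
which matches the exponent in the claim; passing to the complementary event $\{X > \kappa/2\}$ then yields the stated inequality.

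There is no real obstacle in this argument, since it is a direct specialization of a well-known inequality; the only nontrivial choice is the deviation $\delta$, and that is forced by the requirement $(1-\delta)\mu = \kappa/2$. A minor bookkeeping point is that the displayed sum is indexed up to $n$ while the surrounding quantifier uses $\kappa$; I would proceed under the natural reading $n=\kappa$ and treat mutual independence of the $B_j$'s as implicit in the preceding paragraph, where the $\kappa$ moduli subsets are required to be disjoint.
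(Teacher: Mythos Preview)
Your proposal is correct. The paper does not actually supply a proof of this lemma; it simply invokes it as the standard Chernoff bound and moves on, so your derivation via the multiplicative lower-tail inequality with $\delta = (2p-1)/(2p)$ is exactly the computation one would carry out to justify the stated exponent, and your remarks on the $n$ versus $\kappa$ typo and the implicit independence assumption are accurate.
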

\noindent the failure probability is exponentially decaying. Due to the symmetry, the bound also works for any $Y_i$. However, the actual performance is much stronger than this bound. Heuristically, when classification fails, the resulted $\hat{Y}_i$ is almost uniform over $[0,D)$. Moreover, both with faulted classification, the probability that two different groups can output a same quotient estimation is also very limited, especially when $M_l$ is large. Therefore, when $Y_i$ has been robustly recovered at least twice, it is already with considerable possibility of success to be selected in the output. The following simulation results capture such intuition well.

For comparison, here we present a stronger deterministic RCRT for multiple numbers as an extended version of that in \cite{TSP2018}. Rather than assuming that $|\max_{il} \Delta_{il}| < \frac{\Gamma}{4N}$, it can be proved that all the conclusions and algorithms in \cite{TSP2018} hold if $|\max_{l} \Delta_{il} - \min_{l} \Delta_{il}| < \frac{\Gamma}{2N}$ for each $i$. 
%From a bird's eye on the deterministic RCRT, the restriction on the error magnitude basically is a sufficient condition which makes Condition 1 hold. If $|\max_{il} \Delta_{il}| < \frac{\Gamma}{4N}$, then any interval between two consecutive ${r}_{il}$ on the circle modulo $\Gamma$ no less than $\frac{\Gamma}{2N}$ must not be overlapped by any $I_i$. The results from  \cite{TSP-2018} show that once a non-overlapped point over the circle is found, with enough redundancy, the $X_i$ can be reconstructed with a reconstruction error upper bounded by $ \frac{\Gamma}{4N}$. Obviously, when $|\max_{il} \Delta_{il}| < \frac{\Gamma}{4N}$ holds, the interval between two consecutive ${r}^c_{il}$ with the longest length must be non-overlapped. Therefore, we modify the RCRT in \cite{TSP-2018} slightly by regarding the one of $r_{il}$ which is with the longest distance to its neighbors as the a cutting point��. 
The following simulation results show the comparison among Algorithm 1(MAP of classification), Algorithm 2 (MAP of both classification and common residues) and improved deterministic RCRT shown above. Here we assume that $L_{\min}=2$. Moduli are selected as $\Gamma=100$ and $M_1, M_2, ... ,M_L$ are the sequence of primes starting from 21. Referring to the requirement of moduli in \cite{TSP2018}, the lcm of all moduli is in the degree of product of $Y_i$ and we set $L=L_{\min}N=2N$. We assume $\Gamma =100$ and the variance of noise $\sigma^2 = 10^{-SNR/10}$. As for the simulation shown in Fig. 3, for both Algorithm 1 \& 2, we utilize residues from each pair of moduli, in total $\binom{2N}{2}$ many groups as tests. We define a successful robust reconstruction here as the final reconstruction error is upper bounded by $\Gamma$. For each SNR being integers within $[-40, 0]$ and $N$ ranging from $2, 4, 6, 8,10$, we run 1000 simulations to estimate the success rate in each scenario.  Here we provide two metrics of the success rate, where one is termed as the successful reconstruction on average and the other is called the perfect reconstruction. Since in both proposed statistical RCRT, the final reconstruction is finally divided into $N$ independent reconstruction processes for each $Y_i$, the average success rate denotes that the expectation of a $Y_i$ can be robustly recovered. Similarly, the perfect reconstruction denotes that all $Y_i$ are robustly recovered. The two metrics may be of different interests in different applications. A comparison between Algorithm 1 proposed and deterministic RCRT in \cite{TSP2018} is presented in Fig. 3 (a) and similarly the comparison between Algorithm 2 and that in  \cite{TSP2018} is shown in Fig.3 (b). From Fig. 3, Algorithm 2 outperformances Algorithm 1 while as analyzed before, Algorithm 2 is iterative based, which may face a little more computational overhead. We also include the number of iterations of  Algorithm 2 on average in Fig. 4. The five subfigures in Fig. 4 show the relationship between $N$ and the distribution of iteration numbers and for $N=2,4,6,8$ and $10$, how the noise affects the iteration. Here ''low scenario'' denotes the cases where SNR is within $[-40,-20)$ and ''high scenario'' refers to SNR within $[-20,-0]$. In general, within $10$ iteration, Algorithm 2 can reach a local minimal state. At last, we give two examples with respect to how error-correcting techniques can improve or lead to a sharpened trade off. In Fig. 5, when $N=2$ and $L=4$, we try estimation only once with all four moduli but incorporated with error-correction.\footnote{From Theorem \ref{tvt}, here we can tolerate at most one clustering error since $L_{min}=2$ and $\lfloor \frac{L-L_{\min}}{2} \rfloor =1$.} From the first subfigure in Fig. 5, though with some compromise in performance, we implement Algorithm 2 only once. In the second subgraph, to be fair, we test when $N=6$, randomly selecting $\binom{L}{2}$ many subsets of moduli of size $4$ but applying error-correction. It shows that the error-correcting based Algorithm 2 outperformances the previous one when SNR is bigger than -37.

%\begin{figure*}
%\label{map1}
%	\vskip 0.2in
%	\begin{center}
	%	\centerline{\includegraphics[width=90mm]{MAP.eps}}
%		\caption{Comparison between Proposed Statistical RCRT-1 and Deterministic RCRT}
%	\end{center}
%	\vskip -0.2in

%	\begin{center}
	%	\centerline{\includegraphics[width=90mm]{MLE.eps}}
	%	\caption{Comparison between Proposed Statistical RCRT-1, RCRT-2 and Deterministic RCRT}
%	\end{center}
%	\vskip -0.2in
%\end{figure*}

\begin{figure}
\label{iteration}
	\vskip 0.2in
	\begin{center}
		\centerline{\includegraphics[width=90mm]{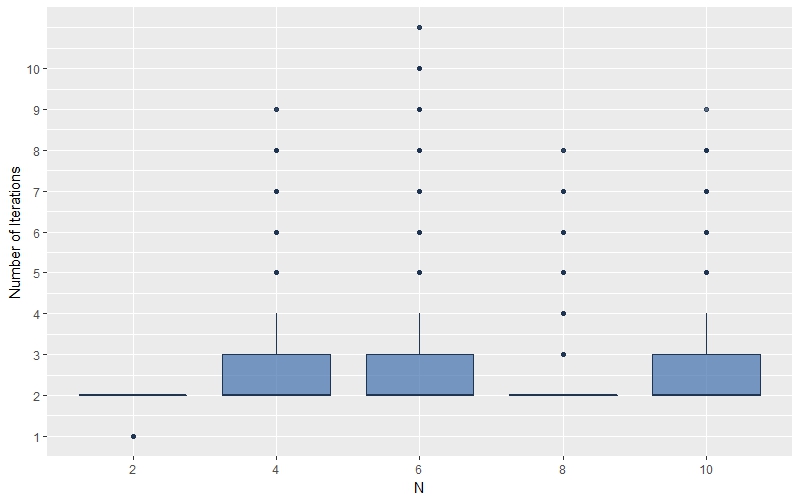}}
		\centerline{\includegraphics[width=90mm]{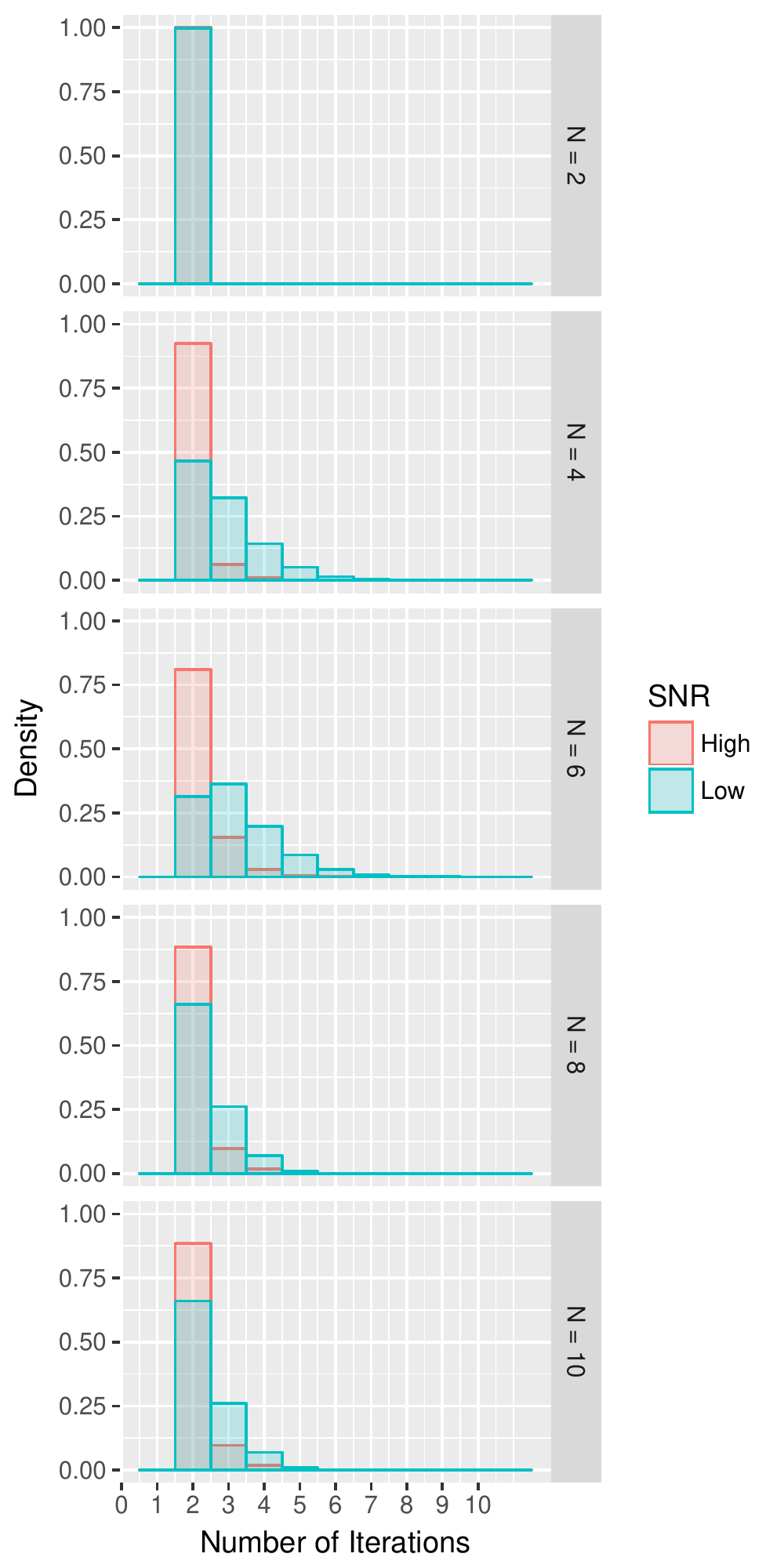}}
		\caption{ Number of Iterations of Algorithm 2 on Average }
		\label{MLEsim}
	\end{center}
	\vskip -0.2in
\end{figure}

\begin{figure}
\label{thres}
	\vskip 0.2in
	\begin{center}
		\centerline{\includegraphics[width=90mm]{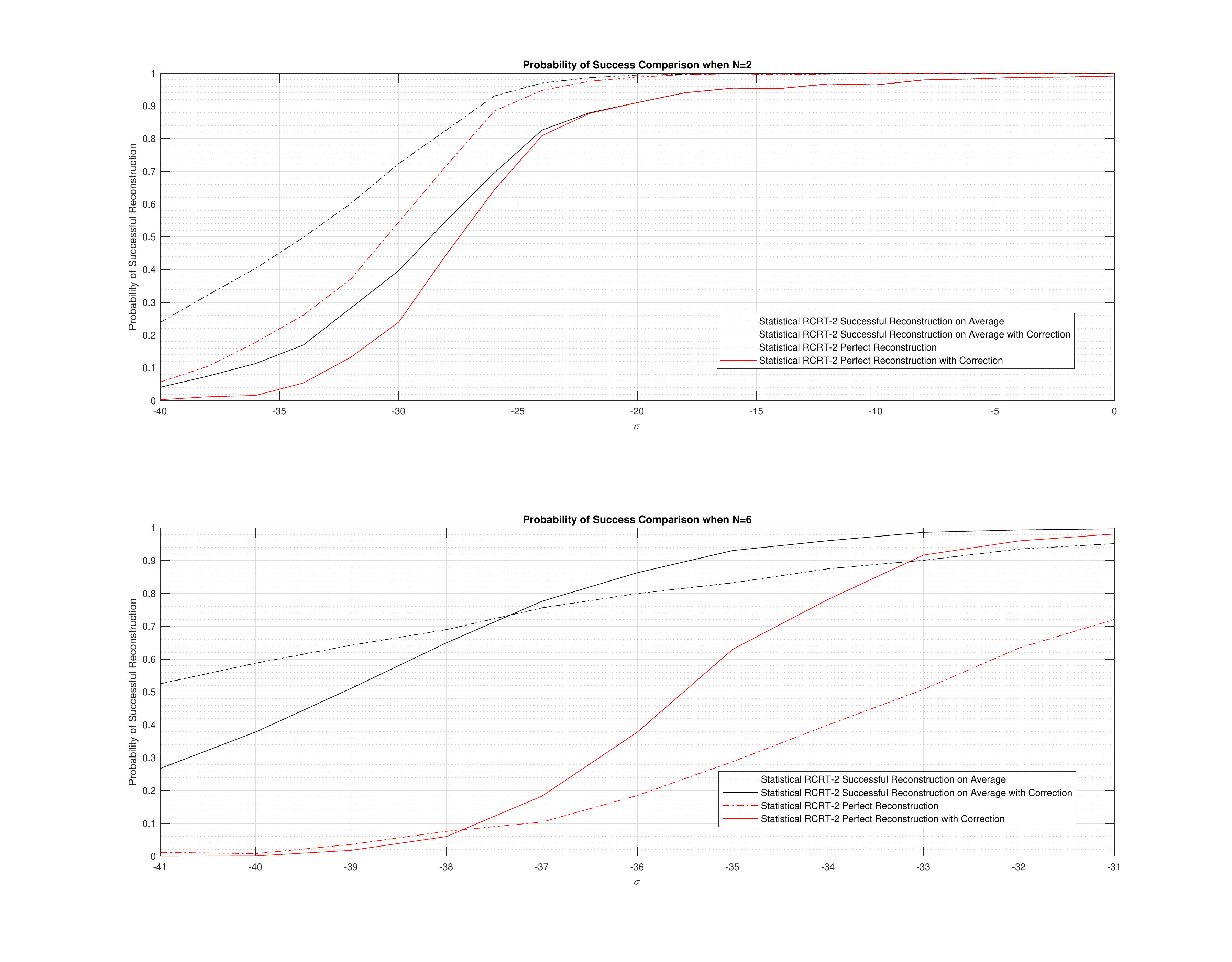}}
		\caption{ Proposed Statistical RCRT-2 with Error Correction }
		\label{MLEsim}
	\end{center}
	\vskip -0.2in
\end{figure}

\section{Conclusion and Prospect}
\noindent In this paper, we present the first statistical based approaches to efficiently solve the robust reconstruction of multiple numbers from unordered residues. Different from deterministic schemes restricted by unique error syndrome detection, the proposed two statistical RCRT overcomes sensitivity both to noise and the number of moduli. Such improvement is significant especially in low and median SNR circumstance and further strengthened with error-correcting techniques. However, as analyzed before, when noise decreases, the probability that the errors meet the feasible correction range will gradually catch up and exceed the success rate of proposed algorithms. Therefore, it would be of great interest to explore the intersection between statistical inference and deterministic error tolerance. Another problem remains open is that what is the optimal selection of the size for the test subset. Roughly, it is nontrivial to determine whether samples from more moduli used in each inference step can benefit or not. When samples from more moduli are used, the clustering accuracy degrades but incorporated with error-correcting codes, the robustness with respect to tolerance of larger burst errors and even a wrong classification error is improved. Besides, more test sets can be constructed. Therefore, the trade off between parameters selection is worthy further investigation.
%\section{Appendix A}

%\section{Appendix B}
\bibliographystyle{plain}
\bibliography{ref}

\end{document}